\documentclass[cameraready]{template/mac_automl_xmu_blue}

\makeatletter
\def\input@path{{content/}}
\makeatother
\graphicspath{{./}{content/}}

\usepackage[toc,page,header]{appendix}
\usepackage{xargs}
\usepackage{todonotes}
\usepackage{tabularx}

\let\cite\citep

\usepackage{mathtools}
\usepackage{amsthm}
\usepackage{tikz}
\usepackage{pgfplots}
\pgfplotsset{compat=1.17}
\usepackage{listings}
\usepackage{textcomp}

\theoremstyle{plain}
\newtheorem{theorem}{Theorem}[section]

\theoremstyle{definition}

\theoremstyle{remark}

\usepgfplotslibrary{polar}
\usetikzlibrary{shapes.geometric}
\usetikzlibrary{patterns}
\pgfplotsset{
    radar_style/.style={
        axis lines=none,
        grid=both,
        major grid style={line width=0.5pt, draw=gray!20},
        minor grid style={line width=0.25pt, draw=gray!10},
        xmin=0, xmax=360,
        ymin=0, ymax=1,
        xtick={0, 90, 180, 270},
        ytick={0.25, 0.5, 0.75, 1.0},
        yticklabels={},
        legend style={
            at={(0.5,-0.15)},
            anchor=north,
            legend columns=-1,
            draw=none,
            font=\scriptsize
        }
    }
}

\definecolor{sapphire}{RGB}{8,115,191}
\definecolor{amber}{RGB}{240,147,43}
\definecolor{primary}{RGB}{0,119,187}
\definecolor{secondary}{RGB}{204,51,17}
\definecolor{tertiary}{RGB}{238,119,51}
\definecolor{quaternary}{RGB}{0,153,136}
\definecolor{research}{RGB}{0,102,51}
\definecolor{lightteal}{RGB}{70, 190, 150}
\definecolor{goldyellow}{RGB}{253, 204, 92}
\definecolor{mintgreen}{RGB}{102, 217, 128}

\setleftheadercontent{%
  \headerlogospace{1.6mm}%
  \headerlogo[-0.08\height]{15.2mm}{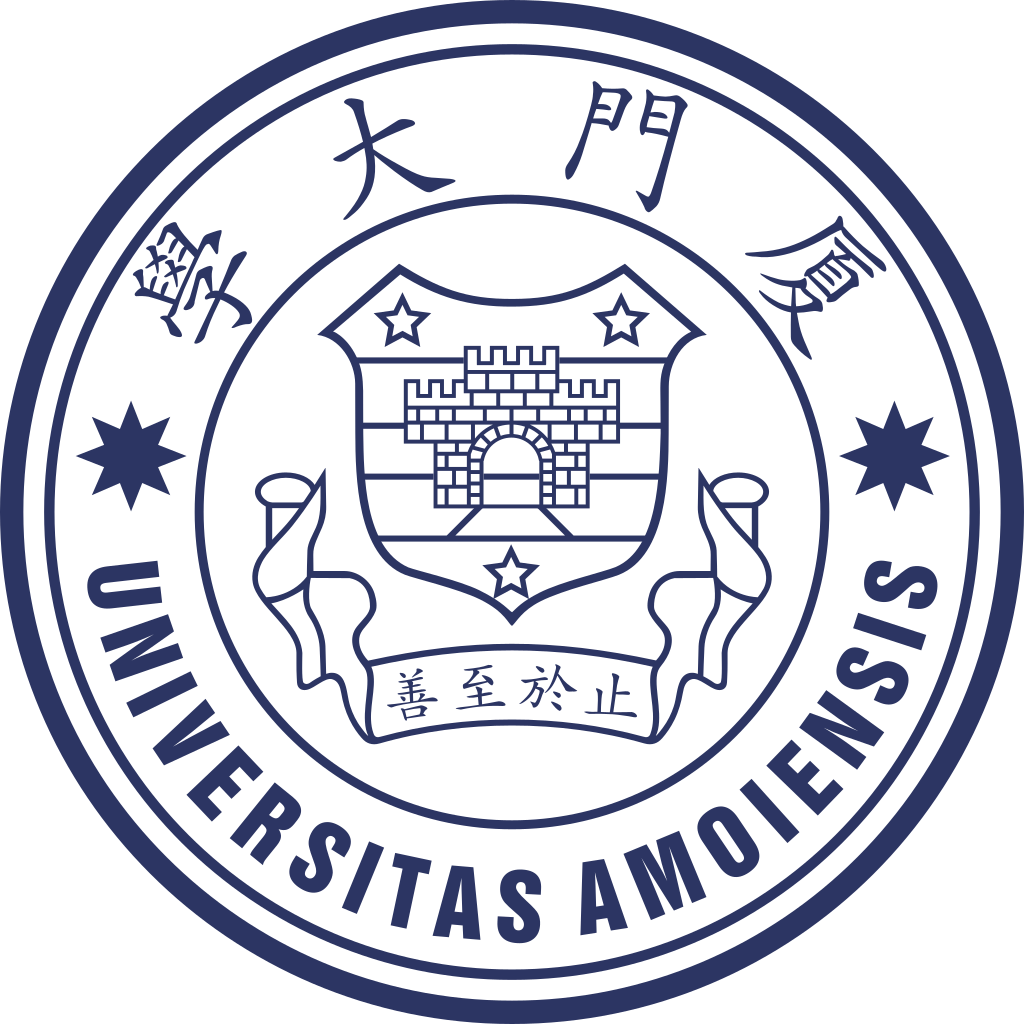}%
  \headerlogospace{2.8mm}%
  \headerlogo[-0.05\height]{12.8mm}{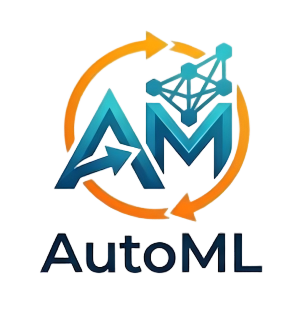}%
}
\setrightheadericon{}
\setrunningheadericon{%
  \headerlogospace{1.2mm}%
  \headerlogo[-0.03\height]{8.2mm}{assets/branding/automl.png}%
}
\setheadergroupname{MAC-AutoML}

\title{A$^2$RBench: An Automatic Paradigm for Formally Verifiable Abstract Reasoning Benchmark Generation}

\setfrontauthors{%
  \authorrow{%
    \authorentry{Qingchuan Ma}{1},\authorsep
    \authorentry{Yuexiao Ma}{1},\authorsep
    \authorentry{Yongkang Xie}{1},\authorsep
    \authorentry{Tianyu Xie}{1}}%
  \authorrow{%
    \authorentry{Xiawu Zheng}{1,2,\corremailmark},\authorsep
    \authorentry{Rongrong Ji}{1,2}}%
}
\setfrontaffiliations{%
  \affiliationline{\authormark{1}Key Laboratory of Multimedia Trusted Perception and Efficient Computing, Ministry of Education of China, Xiamen University, 361005, P.R. China.}
  \affiliationline{\authormark{2}Institute of Artificial Intelligence, Xiamen University}
}
\setfrontcontact{%
  \contactline{\textbf{\corremailmark\ Corresponding Author:} \href{mailto:zhengxiawu@xmu.edu.cn}{zhengxiawu@xmu.edu.cn}}
}
\usecustomauthorlayout

\checkdata[Code]{\href{https://github.com/MAC-AutoML/A2Rbench}{\nolinkurl{github.com/MAC-AutoML/A2Rbench}}}
\checkdata[Keywords]{Machine Learning, ICML}

\abstract{Abstract reasoning ability reflects the intelligence and generalization capacity of LLMs to extract and apply abstract rules. However, accurately measuring this ability remains challenging: existing benchmarks either rely on expensive manual annotation, limiting their scale, or risk measuring memorization rather than genuine reasoning. To address this, we introduce an automated pipeline named A$^2$RBench, encompassing generation, expansion, evaluation, and analysis. Specifically, in the generation stage, LLMs create diverse tasks demanding genuine reasoning; in the expansion stage, LLMs reuse validated rules and expand new input spaces to generate task variations, achieving scaling. However, such a process may cause hallucinations. To eliminate it, we further establish a theoretical framework and prove that programmatic verification—testing whether the inverse operation perfectly reverses the forward operation (cycle consistency)—guarantees a unique solution. Through extensive evaluations on mainstream LLMs, we find: (1) Current LLMs exhibit fundamental deficiencies in abstract reasoning, with top models significantly underperforming humans on a representative subset (39.8\% vs. 68.5\%). (2) Current LLMs fall far short of 2D and 1D in the complexity of generated 3D tasks, revealing their lack of understanding of high-dimensional tasks. (3) Counterintuitively, inputs with higher information complexity can simplify the reasoning process.}

\begin{document}

\maketitle

\begin{figure*}[t!]
    \centering
    \includegraphics[width=0.95\textwidth]{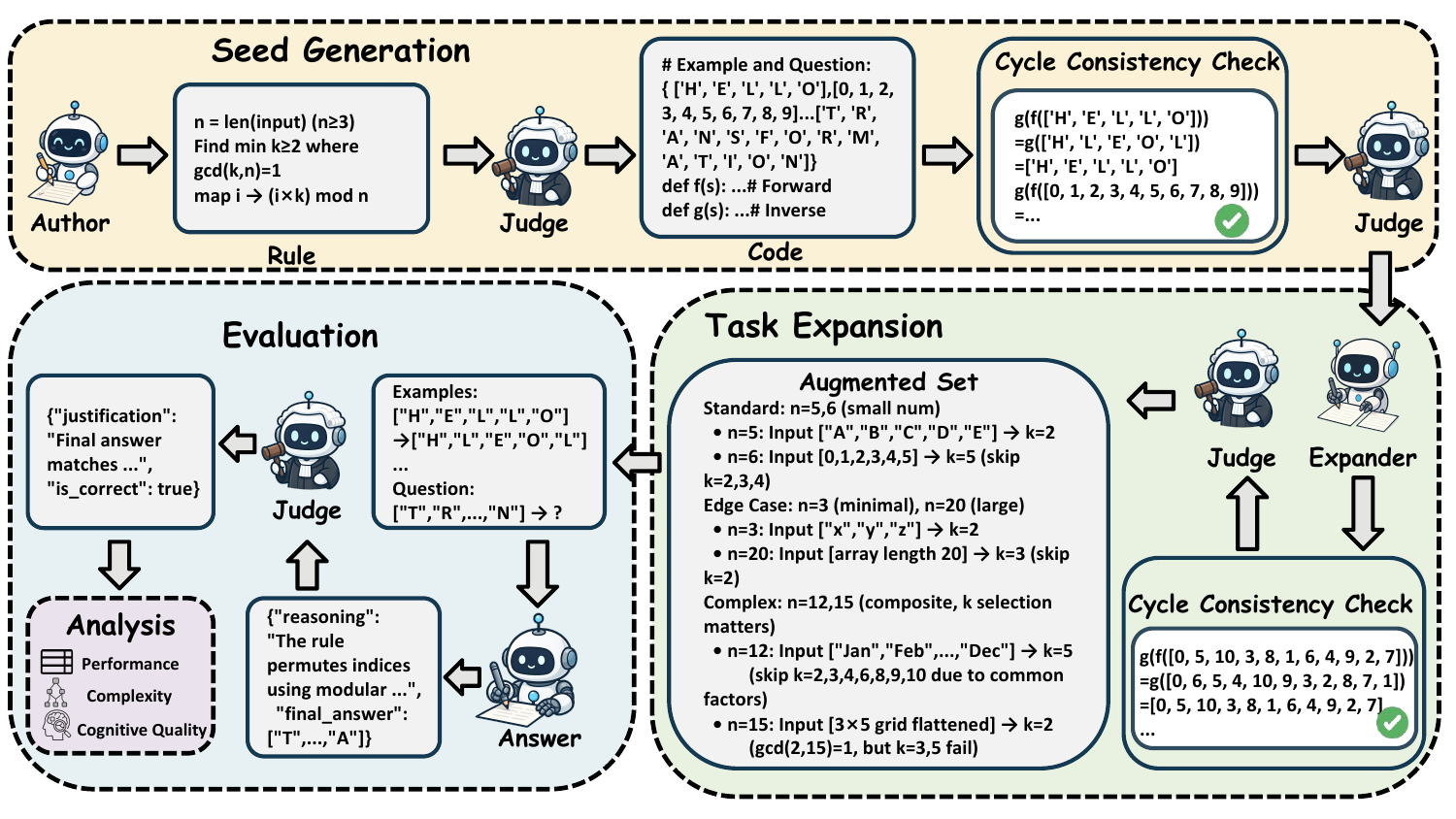}
    \caption{The A$^2$RBench automated pipeline illustrated with an example (rule: ``index permutation via modular arithmetic''). The rule permutes positions: given input length $n$, find smallest integer $k$ where $2 \leq k \leq n-1$ and $\gcd(k,n)=1$, then map position $i$ to $(i \times k) \bmod n$. With $n=14$, smallest coprime is $k=3$, mapping $0\to0$, $1\to3$, $2\to6$, etc. 
    \textbf{Stage 1 (Seed Generation):} Author model generates rule description validated by Judge model. Author implements forward function $f$ (encoder), inverse function $g$ (decoder), example inputs and query input in Python. Cycle Consistency Check verifies $g(f(x))=x$ for all inputs, followed by Judge filtering of trivial cases.
    \textbf{Stage 2 (Task Expansion):} Expander model generates input variations across three difficulty levels (Standard, Edge Case, Complex), reusing validated rule code. Each variation undergoes cycle consistency and judge validation. 
   \textbf{Stage 3 (Evaluation):} Solver models receive examples, infer rule, and answer queries. Judge model verifies correctness against ground truth; for symbolic tasks, symbol remapping ($\phi$) measures reliance on familiar tokens.
    \textbf{Stage 4 (Analysis):} We analyze across three dimensions: performance statistics (accuracy metrics), code complexity (AST-based difficulty), and cognitive quality (reasoning classification). This pipeline generates formally verified tasks at scale while enabling diagnostic evaluation beyond binary correctness.}

    \label{fig:pipeline}
\end{figure*}

\section{Introduction}

Abstract reasoning, a cornerstone of intelligence \cite{hofstadter1995fluid}, reflects the deliberate ``System 2" cognition \cite{kahneman2011thinking} that separates genuine algorithmic inference from sophisticated pattern matching. For Large Language Models (LLMs), which have been trained to follow human instructions through techniques like reinforcement learning from human feedback \cite{ouyang2022training} and are primarily built upon the Transformer architecture \cite{vaswani2017attention}, the ability to perform such reasoning is a critical frontier. While techniques like Chain-of-Thought prompting \cite{wei2022chain}—even those bolstered by high-quality, step-by-step supervision \cite{yeo2025demystifying}—have elicited impressive multi-step reasoning capabilities, a fundamental question remains: are these models truly reasoning, or simply performing sophisticated pattern matching? This concern is captured by the ``stochastic parrots" critique \cite{bender2021dangers}, which warns that models may blur the line between genuine understanding and high-fidelity mimicry. Thus, answering this challenge requires evaluation tools that demand genuine reasoning, offer diversity, and enable scalability.

However, existing benchmarks face a fundamental trade-off between reasoning demands and scalability.  Manually designed benchmarks like the Abstraction and Reasoning Corpus (ARC) require genuine reasoning but remain limited in scale \cite{chollet2019measure}. Conversely, large-scale datasets such as GSM8K, MATH, and BIG-bench, while scalable, risk measuring rote memorization rather than genuine reasoning \cite{cobbe2021training, hendrycks2021measuring, srivastava2023beyond}.

To transcend this impasse, we introduce an automated pipeline named A$^2$RBench, encompassing generation, expansion, evaluation, and analysis. Specifically, in the generation stage, LLMs create diverse tasks demanding genuine reasoning; in the expansion stage, LLMs reuse validated rules and expand new input spaces to generate task variations, achieving scaling. However, such a process may cause hallucinations. To eliminate it, we establish a theoretical framework where each task is implemented as a pair of executable functions: a forward function $f$ and its inverse $g$. We prove that programmatic verification—testing whether the inverse perfectly reverses the forward operation through cycle consistency check $g(f(x)) = x$—guarantees a unique solution (Theorem~\ref{thm:cycle_consistency}). This automated pipeline\footnote{The pipeline is automated after initialization with 20 manually curated seed rules from ARC \cite{chollet2019measure}. All subsequent generation, expansion, and analysis stages require no human intervention.} produces formally verified problems that demand genuine reasoning. Moreover, the known ground-truth rules enable interpretability for analyzing model reasoning behavior.

Through extensive evaluations on mainstream LLMs, we uncover three key findings. First, current LLMs exhibit fundamental deficiencies in abstract reasoning, with the top-performing model achieving only 39.8\% accuracy on a representative subset, significantly underperforming humans at 68.5\%. Second, current LLMs fall far short of 2D and 1D in the complexity of generated 3D tasks, revealing their lack of understanding of high-dimensional tasks. Third, counterintuitively, inputs with higher information complexity can simplify the reasoning process.

Specifically, our contributions are as follows:

\begin{itemize}
\item We establish a mathematical framework proving that cycle consistency ($g(f(x))=x$) is sufficient to guarantee solution uniqueness, providing a formal foundation for automated verification without human judgment.

\item We construct an automated pipeline that generates, expands, evaluates, and analyzes abstract reasoning tasks at scale.

\item We conduct extensive evaluations on 14 mainstream models with fine-grained cognitive analysis, revealing fundamental limitations in abstraction capabilities and providing diagnostic insights beyond surface-level accuracy.
\end{itemize}

\section{Related Work}

Abstract reasoning evaluation has evolved from psychometric instruments like Raven's Progressive Matrices \cite{raven1983manual} to computational benchmarks \cite{barrett2018measuring}, culminating in the Abstraction and Reasoning Corpus (ARC) \cite{chollet2019measure}, though models struggle with concept generalization \cite{moskvichev2023conceptarc}. While large-scale datasets like GSM8K and BIG-bench \cite{cobbe2021training, srivastava2023beyond} offer scalability, they risk measuring memorization over genuine ``System 2" reasoning \cite{kahneman2011thinking, bonnefon2020machine}. Recent work has begun disentangling these effects through symbolic remapping \cite{ma2025benchmarking, mirzadeh2024gsm}, yet the tension between cognitive depth and scalable validation persists.

This has catalyzed LLM-driven paradigms for data generation and evaluation. Self-improvement approaches—from self-instruction \cite{wang2023self} to self-correction via reinforcement learning \cite{huang2023large, kumar2024training}—have matured into self-play frameworks where models enhance reasoning through adversarial games \cite{cheng2024self, zhanggameinstruct}, self-generated rewards \cite{yuan2024self}, or generator-solver roles \cite{liu2025spice}, with applications spanning code generation \cite{lin2025learning} to theorem proving \cite{dong2025stp}. Concurrently, program-aided prompting \cite{chen2022program, gao2023pal} leverages code for reasoning tasks. The LLM-as-a-judge paradigm enables scalable evaluation as automated graders \cite{zheng2023judging}, red-team testers \cite{perez2022red}, bug detectors \cite{mcaleese2024llm}, and benchmark generators \cite{bai2023benchmarking, muhlgay2024generating, li2023beyond, desmond2024evalullm}.

However, LLM-driven approaches inherit model unreliability. LLM judges suffer from subjectivity and bias \cite{shankar2024validates, zhang2023wider}, fundamentally lacking ground truth guarantees. Unlike step-by-step process supervision \cite{lightman2023let} or execution feedback in self-play \cite{dong2024self}, we synthesize LLMs' generative power with deterministic code execution as a formal, one-time guarantee of logical integrity—ensuring both scalability and rigorous verification absent in prior automated approaches.

\section{Theoretical Framework}
\label{sec:theory}

Constructing an LLM-based benchmark for abstract reasoning faces two core challenges. First, generative hallucination may produce logically flawed tasks, requiring robust automated filtering. Second, models may reach correct answers via superficial pattern matching rather than genuine rule induction, necessitating a framework to assess reasoning quality. This section establishes the mathematical foundation addressing both challenges.

\subsection{Formalizing Abstract Reasoning}
\label{subsec:formalization}

Abstract reasoning—the ability to extract generalizable patterns from concrete instances and apply them to novel situations—is a cornerstone of human cognition~\citep{penn2008darwin, holyoak2012oxford}. Building upon this cognitive foundation, recent work has formalized this process as a two-stage computational procedure~\citep{ma2025benchmarking}. In the first stage, Abstraction extracts a general rule $r$ from a set of provided examples $\mathcal{E}$. In the second stage, Reasoning applies this rule to a query input $x_q$ to produce the output $y_q$. 

To illustrate this process concretely, consider observing that ``a poodle barks,'' and ``a bulldog barks.'' The abstraction stage requires extracting the general rule—``dogs bark''—from these specific instances. The reasoning stage then applies this rule to infer that a newly encountered beagle will also bark. This two-stage decomposition captures the essence of moving from concrete observations to abstract principles and back to concrete predictions.

But how do we measure this capability computationally in a benchmark setting? A typical abstract reasoning task presents a model with a set of example transformations, challenges it to infer the underlying rule, and then apply this rule to a new input. Formally, such a task consists of:
\begin{itemize}
    \item A set of example pairs $\mathcal{E} = \{(x_i, y_i)\}_{i=1}^{k}$, where $x_i \in \mathcal{X}$ and $y_i \in \mathcal{Y}$
    \item A query input $x_q \in \mathcal{X}$
    \item An underlying rule $r$ such that $y_i = r(x_i)$ for all examples
\end{itemize}
The spaces $\mathcal{X}$ and $\mathcal{Y}$ represent discrete data structures, including 1D sequences, 2D grids, and 3D voxel arrays in our benchmark.

To rigorously evaluate whether an LLM possesses this capability, we must formalize what it means to ``solve'' such a task. Given the example set $\mathcal{E}$ and the query $x_q$, a solver model must explicitly perform both stages:
\begin{align}
    \hat{r} &\leftarrow \mathcal{M}_{\text{solver}}(\mathcal{E}) \quad \text{(Stage 1: Infer the rule)}, \label{eq:stage1} \\
    \hat{y}_q &= \hat{r}(x_q) \quad \text{(Stage 2: Apply the rule)}. \label{eq:stage2}
\end{align}
Critically, our evaluation protocol requires solver models to explicitly articulate their inferred rule $\hat{r}$ and show their reasoning steps. This verbalization enables diagnostic analysis of reasoning quality (Section~\ref{subsec:depth}), allowing us to distinguish genuine rule induction from superficial pattern matching.

As formalized above, a fundamental requirement for building such a benchmark emerges: all examples in $\mathcal{E}$ and the final question-answer pair $(x_q, y_q)$ must follow the same underlying rule $r$. However, checking semantic equivalence between a natural language description and a set of input-output pairs requires either expensive human judgment or another LLM evaluator, reintroducing the very unreliability (hallucination, subjectivity) we seek to avoid.To address this challenge, we make a critical design choice: we mandate that the LLM generates the rule $r$ as executable Python code, along with the input variables (example inputs $\{x_i\}$ and query input $x_q$). The corresponding outputs $\{y_i\}$ and the ground-truth answer $y_q$ are then derived not from the LLM's generation, but by programmatically executing the rule function $r$ on these inputs:
\begin{equation}
    y_i := r(x_i), \; \forall i \in \{1, \ldots, k\}, \quad y_q := r(x_q).
    \label{eq:code_execution}
\end{equation}

This transformation into deterministic programs ensures perfect consistency between examples and the underlying rule by construction, enabling automated verification and scalable development of abstract reasoning benchmarks. Nevertheless, programmatic execution alone cannot detect logical errors in the LLM-generated rule code.
This remaining challenge necessitates a rigorous verification mechanism, which we establish in the following section.

\subsection{Ensuring Task Validity via Cycle Consistency}
\label{subsec:well_posed}

The code-based design in Section~\ref{subsec:formalization} ensures consistency between examples and the rule by construction. However, a critical challenge remains: the LLM-generated code itself may contain logical errors due to hallucination. A function may execute without runtime errors yet embody fundamentally flawed logic. This necessitates a robust, programmatic filtering mechanism to detect and eliminate such invalid tasks.

Our approach is to leverage mathematical structure. Rather than relying on another LLM to subjectively judge code quality—which would reintroduce the very unreliability we seek to avoid—we ask: what mathematical properties must a valid reasoning task possess? If we can formalize these properties, we can design an automated check that either proves validity or detects flaws with certainty. This motivates a principled definition of task validity.

To establish such a mechanism, we first require a formal definition of what constitutes a ``valid task.'' Drawing inspiration from the mathematical concept of a Well-Posed Problem~\citep{hadamard1902problemes}, we define a well-posed abstract reasoning task as one that satisfies three criteria:

\begin{enumerate}
    \item \textbf{Uniqueness}: Every input has exactly one correct output, ensuring an unambiguous answer.
    \item \textbf{Consistency}: The same rule applies uniformly to all examples and the query.
    \item \textbf{Verifiability}: Logical soundness can be confirmed via a deterministic, automated procedure.
\end{enumerate}

With this definition established, we now turn to a key question: what mathematical properties must a rule function $r$ possess to satisfy these three criteria? The answer lies in analyzing the type of mapping from inputs $\mathcal{X}$ to outputs $\mathcal{Y}$. Each criterion imposes strict constraints that progressively narrow the space of permissible functions:

\textbf{Uniqueness eliminates One-to-Many mappings.} If a single input can map to multiple outputs, the correct answer becomes inherently ambiguous, violating the first criterion (see Appendix~\ref{subsec:one_to_many_example} for concrete examples).

\textbf{Verifiability eliminates Many-to-One mappings.} While deterministic, such mappings lose information—multiple inputs collapse to the same output. This irreversibility makes it impossible to construct an inverse operation for automated verification(Appendix~\ref{subsec:many_to_one_example}).

\textbf{This leaves One-to-One (Bijective) mappings as a viable design choice for automatic verification.} A bijective function guarantees that every input maps to a unique output (satisfying Uniqueness), and crucially, it is invertible—there exists an inverse function $g$ such that $g(f(x)) = x$. This two-way reversibility provides the mathematical foundation for our automated validation framework.

We therefore enforce that all generated rules must be bijections. This restriction allows us to design the Cycle Consistency Check: we require the Author LLM to generate both a forward function $f$ (encoder) and its inverse $g$ (decoder), then programmatically verify that the round-trip transformation returns to the original input. Formally, for all $x$ in the input space $\mathcal{X}$:
\begin{equation}
    g(f(x)) = x, \quad \forall x \in \mathcal{X}
    \label{eq:cycle_consistency}
\end{equation}

This automated test serves as a powerful filter in practice, successfully detecting flawed logic, implementation bugs, and inconsistencies between forward and inverse functions (concrete examples in Appendix~\ref{subsec:flawed_code_example}). However, a fundamental question remains: does passing this check guarantee that a task is well-posed, or is it merely a useful heuristic? To provide a rigorous foundation for our automated pipeline, we prove the former:

\begin{theorem}[Cycle Consistency Guarantees Validity]
\label{thm:cycle_consistency}
If a generated task, defined by functions $(f, g)$, satisfies the cycle consistency check (Eq.~\ref{eq:cycle_consistency}), it is guaranteed to have a unique solution and be logically verifiable, thereby qualifying as a Well-Posed Problem.
\end{theorem}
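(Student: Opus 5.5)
The plan is to verify the three criteria of a Well-Posed Problem one at a time, using only the identity $g(f(x)) = x$ for all $x \in \mathcal{X}$ (Eq.~\ref{eq:cycle_consistency}) together with the fact that $f$ and $g$ are deterministic programs. Throughout, I would take the rule of the task to be $r := f$. I would also restrict the codomain to the image $\mathcal{Y} := f(\mathcal{X})$, since example outputs and the query answer are always produced by executing $f$ on inputs (Eq.~\ref{eq:code_execution}).

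\textbf{Uniqueness.} A deterministic program is a function, so each input $x$ yields exactly one value $f(x)$; this rules out one-to-many mappings. Next I would show that $f$ is injective. If $f(x_1) = f(x_2)$, then applying $g$ gives $x_1 = g(f(x_1)) = g(f(x_2)) = x_2$. Hence $f \colon \mathcal{X} \to f(\mathcal{X})$ is a bijection with two-sided inverse $g|_{f(\mathcal{X})}$. For the other composition, take $y = f(x)$; then $f(g(y)) = f(g(f(x))) = f(x) = y$. It follows that the query answer $y_q = f(x_q)$ is the unique correct output. Moreover, no information is lost, so distinct inputs are always distinguishable from their outputs.

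\textbf{Consistency.} This follows from construction. Every $y_i$ and $y_q$ is computed by the same program $f$, so the identical rule governs every example and the query.

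\textbf{Verifiability.} I would exhibit the deterministic procedure explicitly: execute $f$, then $g$, and test equality with the original input. Because $g$ recovers $x$ from $f(x)$, any candidate answer $\hat y$ can be checked mechanically. It is correct exactly when $g(\hat y) = x_q$ and $f(g(\hat y)) = \hat y$; by injectivity this holds if and only if $\hat y = f(x_q)$.

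The main obstacle is making precise what ``for all $x \in \mathcal{X}$'' means, because the pipeline only tests the finite set of generated inputs $\{x_i\} \cup \{x_q\}$. I would handle this by stating the theorem relative to the task's actual input space $\mathcal{X}_{\text{task}} = \{x_1,\dots,x_k,x_q\}$. The injectivity argument above goes through verbatim on that set, and that set is all a well-posed instance needs. I would remark that the guarantee extends to larger domains only insofar as the check covers them. A second subtlety is that the check certifies the bijectivity of $f$, not that $f$ matches its natural-language description. I would argue this is harmless for well-posedness, because the rule is defined to be the code, and the description is not part of the definition.
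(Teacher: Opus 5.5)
Your proposal is correct and follows essentially the same route as the paper's proof. Both check the three criteria separately: uniqueness from $f$ being a well-defined (deterministic) function plus the injectivity argument $x_1 = g(f(x_1)) = g(f(x_2)) = x_2$, consistency by construction from $y_i := f(x_i)$, and verifiability by exhibiting the cycle check as a deterministic procedure, with your extra remarks (codomain restricted to $f(\mathcal{X})$, $f \circ g = \mathrm{id}$ there, hypothesis read on the finitely many tested inputs) matching the caveats the paper itself states.
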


The proof is provided in Appendix~\ref{subsec:proof_cycle_consistency}. This theorem forms the theoretical foundation of our automated filtering pipeline, ensuring that our generated tasks are guaranteed by both theoretical well-posedness and programmatic logical correctness.

\subsection{Distinguishing Reasoning from Surface Fitting}
\label{subsec:depth}

The framework in Section~\ref{subsec:well_posed} guarantees that each task has a unique, valid solution—ensuring the structural integrity of our benchmark. However, structural validity alone is insufficient for evaluating abstract reasoning. Recall that our core objective, as motivated in the introduction, is to distinguish genuine rule induction from sophisticated pattern matching—the difference between true ``System 2'' reasoning and the ``stochastic parrot'' behavior~\citep{bender2021dangers}. A model might produce the correct answer through multiple pathways: by inferring the intended general rule, by overfitting to examples, or even by lucky guessing. To achieve our diagnostic goal, we must look beyond binary correctness and analyze the quality of the reasoning process itself.

Consider the sequence transformations $[1, 2] \rightarrow [2, 4]$ and $[3, 4] \rightarrow [6, 8]$. For the input $[5, 6]$, a model might infer the intended rule (``double each element''), an overly specific variant (``multiply the first element by 2, then the second by 2''), or a complex polynomial overfitted to the examples---all yielding the correct answer $[10, 12]$, yet only the first exhibits true generalization. This illustrates a fundamental ambiguity: for any example set, multiple rules can perfectly explain the observations, and the inferred rule $\hat{r}$ may fit all examples flawlessly yet differ fundamentally from the ground-truth rule $r$.

Which rule should be considered ``correct''? We appeal to Occam's Razor: when multiple explanations fit the data equally well, the simplest one is most likely correct. This principle can be connected to Solomonoff's theory of inductive inference~\citep{solomonoff1964formal}: given the set of all rule hypotheses $\mathcal{C}_{R}(\mathcal{E})$ that are consistent with a set of examples $\mathcal{E}$, the simplest consistent rule can be written as
\begin{equation}
    r^* = \arg\min_{r \in \mathcal{C}_{R}(\mathcal{E})} K(r)
    \label{eq:occam}
\end{equation}
where $K(r)$ denotes Kolmogorov complexity. While Kolmogorov complexity is uncomputable, Eq.~\ref{eq:occam} serves as an explanatory criterion for evaluating reasoning quality. To operationalize this, we leverage that our evaluation protocol (Section~\ref{subsec:formalization}) requires models to explicitly articulate their inferred rule $\hat{r}$ and reasoning steps. We employ an Analyst LLM as a heuristic proxy to classify the quality of correct answers:

\begin{itemize}
    \item \textbf{Surface Fitting}: The model produces the correct answer through flawed logic or pattern matching, without articulating a valid general rule.
    \item \textbf{Inferior Rule}: The model identifies a rule that works but is unnecessarily complicated or overly specific.
    \item \textbf{True Generalization}: The model identifies the simplest, most general rule.
\end{itemize}

This cognitive analysis allows our benchmark to measure reasoning quality, not merely final answer accuracy, providing deeper diagnostic insights into abstract reasoning in LLMs. We validate this Analyst LLM classification on stratified human-annotated subsets, with details reported in Appendix~\ref{subsec:human_agreement_validation}.

\section{Methodology}
\label{sec:methodology}

To operationalize the framework from Section \ref{sec:theory}, we design a four-stage automated pipeline (Figure \ref{fig:pipeline}): seed generation, task expansion, evaluation, and analysis.

\subsection{Seed Generation}

To construct our benchmark, we first generate a small set of high-quality seed problems, each representing a distinct abstract rule. These seeds will later be expanded into many variations at low cost.

Our seeds span three logical dimensions and two reasoning domains. The three dimensions are: 1D (sequences like $[1,2,3]$), 2D (grids like $[[a,b],[c,d]]$), and 3D (voxel cubes). The two domains are: symbolic rules (structural transformations independent of symbol meaning) and semantic rules (transformations requiring external knowledge).

We generate each seed through a two-stage process. First, an author model $\mathcal{M}_{\text{author}}$ generates a natural language description of a bijective rule and its inverse, $(D_f, D_g)$, guided by randomly sampled inspiration rules from ARC~\cite{chollet2019measure}:
\begin{equation}
    (D_f, D_g) = \mathcal{M}_{\text{author}}(\mathcal{P}_{\text{rule}}(R_{\text{inspire}}))
\end{equation}
A judge model $\mathcal{M}_{\text{judge}}$ performs a preliminary review to filter out descriptions with obvious logical flaws. If approved, the author model then implements the rule as executable Python code:
\begin{equation}
    S_{\text{code}} = (f, g, \mathcal{X}) = \mathcal{M}_{\text{author}}(\mathcal{P}_{\text{code}}(D_f, D_g))
\end{equation}
where $S_{\text{code}}$ contains the forward function $f$, inverse function $g$, and input set $\mathcal{X}$ (examples and query). We adopt this two-stage design because natural language descriptions are easier for the judge to evaluate than code, allowing us to discard many invalid concepts before expensive code generation.

To ensure each seed is well-posed per Theorem~\ref{thm:cycle_consistency}, we programmatically execute the Cycle Consistency Check, verifying that $g(f(x)) = x$ for all inputs. The judge model then performs a final review to discard trivial cases (e.g., identity functions or empty operations). This multi-step validation ensures logical soundness but makes seed generation costly, motivating the efficient expansion strategy in the next section.

\subsection{Task Expansion}

To scale the benchmark efficiently, we expand each validated seed into multiple variations. The key insight is to reuse the seed's verified rule and code, generating only new input spaces. This allows us to create diverse test cases while maintaining the formal guarantees established during seed generation.

An expander model $\mathcal{M}_{\text{expander}}$ generates new input sets $\mathcal{X}_i$ for each seed, where each input set contains new example inputs and a new query input:
\begin{equation}
    \mathcal{X}_i = \mathcal{M}_{\text{expander}}(\mathcal{P}_{\text{expand}}(D_f, S_{\text{code}}, H_x))
\end{equation}
Here, $\mathcal{P}_{\text{expand}}$ is the prompt template providing the rule description $D_f$, the reference code $S_{\text{code}}$, and the history of previously generated inputs $H_x$ to ensure diversity.

We generate up to 9 variations for each seed, following a three-phase strategy designed to systematically probe different aspects of solver capabilities. Variations V1-V3 focus on standard cases to verify baseline understanding of the rule. Variations V4-V6 target edge cases (e.g., empty structures, boundary values) to test robustness. Variations V7-V9 introduce complex or adversarial patterns to stress-test reasoning under challenging conditions. This phased approach ensures comprehensive coverage of the problem space.

Each generated variation undergoes the same validation as the seeds: we execute the code to obtain ground-truth outputs, verify cycle consistency, and use the judge model to filter out trivial or flawed instances. This reuse-and-expand strategy achieves remarkable economic efficiency: while seed generation costs \$0.19 per task, expansion costs merely \$0.005 per task—a 38$\times$ reduction. As detailed in Appendix~\ref{subsec:cost_analysis} and Table~\ref{tab:cost_comparison}, this automated paradigm reduces per-task cost to \$0.016 on average, orders of magnitude lower than manually curated benchmarks.

\subsection{Evaluation Protocol}

We evaluate solver models by comparing their outputs against ground-truth answers using a judge model $\mathcal{M}_{\text{judge}}$. Beyond measuring accuracy, we also assess whether models rely on familiar symbols rather than abstract structure.

Since our benchmark includes symbolic reasoning tasks, we adopt the symbolic dependency metric \cite{ma2025benchmarking}. We first measure a model's accuracy on the original task set, denoted as $\text{Acc}_{\text{P0}}$. Then, for symbolic tasks only (as semantic tasks inherently depend on symbol meaning, e.g., chemical elements, alphabetical order), we apply a symbol mapping function $\phi$ that remaps every token (e.g., `0' $\to$ `a', `1' $\to$ `b') while preserving the underlying logical structure. A model that depends on familiar symbols will fail when symbols change, while a model that infers the true rule will maintain performance.We measure accuracy on these remapped tasks as $\text{Acc}_{\text{P1}}$. The symbolic dependency score is simply the performance drop:
\begin{equation}
    \Delta_S = \text{Acc}_{\text{P0}} - \text{Acc}_{\text{P1}}
\end{equation}
A larger $\Delta_S$ indicates greater reliance on familiar symbols.

\subsection{Analysis}

Traditional benchmarks measure only final answer correctness. We aim to understand two deeper questions: how difficult is the task, and how does the model reason? Our framework enables this because every task is code (allowing objective difficulty measurement) and the ground-truth rule is known (allowing reasoning quality assessment).

We perform three analyses. First, we compute performance statistics—accuracy across rule types and dimensions—to establish what happened. Second, we analyze the Abstract Syntax Tree (AST) of $S_{\text{code}}$ to measure task complexity via metrics like loop depth and conditional branches, revealing how hard the task is. Third, we use analyst model $\mathcal{M}_{\text{analyst}}$ to classify the solver's chain-of-thought as surface fitting, inferior rule, or true generalization according to the Occam-style criterion in Eq.~\ref{eq:occam}, diagnosing why the model succeeded or failed. These three perspectives jointly answer: what is the result, how difficult is the task, and why.

\begin{table*}[t!]
\centering
\caption{\textbf{Main Results on A$^2$RBench.} 
Model performance across three dimensions. 
\textbf{Overall Performance:} \textit{Total/Sym/Sem Acc} denote accuracy on all 1,054 tasks, symbolic tasks (structure-based), and semantic tasks (knowledge-dependent), respectively. 
\textbf{Symbolic Dependency:} \textit{Original (P0)} and \textit{Mapped (P1)} measure accuracy on symbolic tasks before and after symbol remapping (e.g., `0'→`a'); \textit{Gap ($\Delta_S$)} represents the performance difference before and after remapping, quantifying reliance on familiar symbols—larger gap indicates stronger symbolic dependency. 
\textbf{Generalization Gap:} \textit{Seed (V0)} and \textit{Aug. (V1-9)} show accuracy on original seed tasks versus augmented variations (standard/edge/adversarial cases).}

\label{tab:main_results_leaderboard}
\begin{sc}
\resizebox{\textwidth}{!}{%
\small 
\begin{tabular}{l|ccc|ccc|cc}
\toprule
\multirow{2}{*}{\textbf{Model}} & \multicolumn{3}{c|}{\textbf{Overall Performance}} & \multicolumn{3}{c|}{\textbf{Symbolic Dependency Analysis}} & \multicolumn{2}{c}{\textbf{Generalization Gap}} \\
\cmidrule(lr){2-4} \cmidrule(lr){5-7} \cmidrule(lr){8-9}
& Total Acc & Sym Acc & Sem Acc & Original (P0) & Mapped (P1) & Gap ($\Delta_S$) & Seed (V0) & Aug. (V1-9) \\
\midrule
\textbf{Gemini3-Pro} \cite{google2025gemini3}      & \textbf{40.9\%} & \textbf{37.0}\% & 48.6\% & 39.3\% & \textbf{34.8\%} & 4.6\%          & \textbf{39.8\%} & \textbf{41.0\%} \\
GPT-5            \cite{singh2025openai}          & 39.0\% & 32.5\% & \textbf{52.0}\% & \textbf{41.3\%} & 23.6\% & \textbf{17.7\%} & 38.9\% & 39.0\% \\
GPT-5-Mini    \cite{singh2025openai}             & 36.9\% & 31.5\% & 47.7\% & 40.2\% & 22.8\% & 17.4\%          & 34.3\% & 37.2\% \\
O4-mini        \cite{openai2025o3o4mini}            & 33.7\% & 28.3\% & 44.3\% & 36.5\% & 20.2\% & 16.2\%          & 34.3\% & 33.6\% \\
Claude-Sonnet-4.5  \cite{anthropic2025claude45sonnet}        & 30.6\% & 33.3\% & 25.3\% & 34.2\% & 32.5\% & 1.7\%           & 23.1\% & 31.5\% \\
GPT-5.2       \cite{singh2025openai}              & 28.3\% & 30.5\% & 23.9\% & 32.5\% & 28.5\% & 4.0\%           & 21.3\% & 29.1\% \\
Gemini-2.5-Flash  \cite{comanici2025gemini}       & 27.7\% & 26.4\% & 30.4\% & 29.3\% & 23.4\% & 6.0\%           & 25.0\% & 28.0\% \\
GLM-4.6      \cite{zhipuai2025glm46}              & 21.0\% & 23.1\% & 16.8\% & 24.8\% & 21.4\% & 3.4\%           & 13.9\% & 21.8\% \\
Deepseek-V3.2     \cite{liu2025deepseek}          & 20.8\% & 20.8\% & 20.7\% & 23.4\% & 18.2\% & 5.1\%           & 17.6\% & 21.1\% \\
Qwen3-32B  \cite{yang2025qwen3}
& 19.3\% & 22.5\% & 12.8\% & 23.1\% & 21.9\% & 1.1\%           & 6.5\%  & 20.7\% \\
GPT4o-Mini     \cite{openai2024gpt4omini}             & 16.1\% & 19.5\% & 9.4\%  & 23.9\% & 15.1\% & 8.8\%           & 8.3\%  & 17.0\% \\
Qwen3-14B   \cite{yang2025qwen3}                & 14.2\% & 17.0\% & 8.8\%  & 16.8\% & 17.1\% & -0.3\%          & 5.6\%  & 15.2\% \\
Gemini3-Flash       \cite{google2025gemini3}     & 13.1\% & 13.0\% & 13.4\% & 15.7\% & 10.3\% & 5.4\%           & 11.1\% & 13.3\% \\
Qwen3-8B   \cite{yang2025qwen3}                & 0.0\%  & 0.0\%  & 0.0\%  & 0.0\%  & 0.0\%  & 0.0\%           & 0.0\%  & 0.0\%  \\
\bottomrule
\end{tabular}%
}
\end{sc}
\end{table*}

\begin{figure*}[t!]
    \centering
    \begin{tikzpicture}
        \begin{axis}[
            ybar stacked,
            width=\textwidth, 
            height=5.1cm,
            bar width=10pt, 
            symbolic x coords={Gemini3-Pro, GPT-5, GPT-5-Mini, O4-mini, Gemini-2.5-Flash, Claude-Sonnet-4.5, Deepseek-V3.2, GPT-5.2, GLM-4.6, Gemini3-Flash, Qwen3-32B, GPT4o-Mini, Qwen3-14B, Qwen3-8B},
            xtick=data,
            xticklabel style={font=\small, rotate=45, anchor=east, align=right},
            enlarge x limits=0.06,
            ymin=0, ymax=100,
            ylabel={Proportion of Cognitive Outcomes (\%)},
            ylabel style={xshift=-20pt}, 
            ytick={0,20,40,60,80,100},
            ymajorgrids=true,
            grid style={line width=0.5pt, draw=gray!20, dashed},
            legend style={
                at={(0.5,1.35)}, 
                anchor=north,
                legend columns=3, 
                draw=none,
                font=\footnotesize,
                column sep=3pt,
            },
            axis on top, 
        ]

        \addplot[fill=secondary, draw=none, fill opacity=0.85] coordinates {
            (Gemini3-Pro, 40.03) (GPT-5, 30.93) (GPT-5-Mini, 28.28) (O4-mini, 36.71) (Gemini-2.5-Flash, 31.97) (Claude-Sonnet-4.5, 11.0) (Deepseek-V3.2, 8.63) (GPT-5.2, 3.41) (GLM-4.6, 14.9) (Gemini3-Flash, 85.01) (Qwen3-32B, 12.71) (GPT4o-Mini, 18.03) (Qwen3-14B, 26.0) (Qwen3-8B, 100.0)
        };
        \addlegendentry{Cognitive Breakdown}

        \addplot[fill=tertiary, draw=none, fill opacity=0.85] coordinates {
            (Gemini3-Pro, 14.14) (GPT-5, 26.56) (GPT-5-Mini, 29.22) (O4-mini, 23.81) (Gemini-2.5-Flash, 30.55) (Claude-Sonnet-4.5, 49.43) (Deepseek-V3.2, 59.39) (GPT-5.2, 59.96) (GLM-4.6, 54.08) (Gemini3-Flash, 1.04) (Qwen3-32B, 60.05) (GPT4o-Mini, 58.45) (Qwen3-14B, 54.46) (Qwen3-8B, 0.0)
        };
        \addlegendentry{Abstraction Failure}
        
        \addplot[fill=amber, draw=none, fill opacity=0.85] coordinates {
            (Gemini3-Pro, 5.03) (GPT-5, 4.17) (GPT-5-Mini, 6.26) (O4-mini, 5.98) (Gemini-2.5-Flash, 9.39) (Claude-Sonnet-4.5, 9.1) (Deepseek-V3.2, 11.10) (GPT-5.2, 8.15) (GLM-4.6, 10.25) (Gemini3-Flash, 0.76) (Qwen3-32B, 7.87) (GPT4o-Mini, 7.40) (Qwen3-14B, 5.03) (Qwen3-8B, 0.0)
        };
        \addlegendentry{Reasoning Error}

        
        \addplot[fill=mintgreen, draw=none, fill opacity=0.7] coordinates {
            (Gemini3-Pro, 4.93) (GPT-5, 5.60) (GPT-5-Mini, 6.17) (O4-mini, 7.12) (Gemini-2.5-Flash, 5.31) (Claude-Sonnet-4.5, 9.49) (Deepseek-V3.2, 3.13) (GPT-5.2, 10.72) (GLM-4.6, 5.60) (Gemini3-Flash, 4.55) (Qwen3-32B, 9.58) (GPT4o-Mini, 11.76) (Qwen3-14B, 7.87) (Qwen3-8B, 0.0)
        };
        \addlegendentry{Surface Fitting}

        \addplot[fill=lightteal, draw=none, fill opacity=0.8] coordinates {
            (Gemini3-Pro, 4.17) (GPT-5, 4.17) (GPT-5-Mini, 3.23) (O4-mini, 2.85) (Gemini-2.5-Flash, 4.65) (Claude-Sonnet-4.5, 5.50) (Deepseek-V3.2, 4.46) (GPT-5.2, 5.60) (GLM-4.6, 6.07) (Gemini3-Flash, 2.28) (Qwen3-32B, 5.50) (GPT4o-Mini, 2.18) (Qwen3-14B, 3.61) (Qwen3-8B, 0.0)
        };
        \addlegendentry{Inferior Rule}

        \addplot[fill=quaternary, draw=none, fill opacity=0.95] coordinates {
            (Gemini3-Pro, 31.69) (GPT-5, 28.56) (GPT-5-Mini, 26.85) (O4-mini, 23.53) (Gemini-2.5-Flash, 18.12) (Claude-Sonnet-4.5, 15.46) (Deepseek-V3.2, 13.28) (GPT-5.2, 12.14) (GLM-4.6, 9.11) (Gemini3-Flash, 6.36) (Qwen3-32B, 4.27) (GPT4o-Mini, 2.18) (Qwen3-14B, 3.04) (Qwen3-8B, 0.0)
        };
        \addlegendentry{True Generalization}

        \end{axis}
    \end{tikzpicture}
    \caption{\textbf{A Spectrum of Cognitive Outcomes across LLMs.} This stacked bar chart categorizes each model's performance into a six-part spectrum. A warm color gradient (\textcolor{secondary}{red} to \textcolor{amber}{amber}) denotes different types of failures, from fundamental breakdowns to execution errors. A cool color gradient (\textcolor{mintgreen}{light green} to \textcolor{quaternary}{dark green}) distinguishes between three qualities of success: Surface Fitting, Inferior Rule, and True Generalization.}
    \label{fig:failure_profile}
\end{figure*}

\begin{figure*}[t!]
    \centering
    \begin{tikzpicture}
        \begin{axis}[
            ybar,
            width=\textwidth,        
            height=5.5cm,
            bar width=6pt,           
            symbolic x coords={Gemini3-Pro, GPT-5, GPT-5-Mini, Claude-Sonnet-4.5, O4-mini, GPT-5.2, Gemini-2.5-Flash, GLM-4.6, Deepseek-V3.2, Qwen3-32B, Gemini3-Flash},
            xtick=data,
            xticklabel style={
                font=\small,
                rotate=45,            
                anchor=east
            },
            enlarge x limits=0.07,    
            ymin=0, ymax=55,
            ylabel={Accuracy (\%)},
            ymajorgrids=true,
            grid style={line width=0.5pt, draw=gray!20, dashed},
            legend style={
                at={(0.5,1.20)},     
                anchor=north,
                legend columns=-1,   
                draw=none,
                font=\small
            },
            nodes near coords,
            nodes near coords style={
                font=\tiny,
                rotate=90,
                anchor=west,
                color=black
            }
        ]
        \addplot[fill=sapphire, draw=none] coordinates {
            (Gemini3-Pro, 48.3)
            (GPT-5, 44.6)
            (GPT-5-Mini, 46.0)
            (Claude-Sonnet-4.5, 45.8)
            (O4-mini, 44.6)
            (GPT-5.2, 39.3)
            (Gemini-2.5-Flash, 39.3)
            (GLM-4.6, 28.5)
            (Deepseek-V3.2, 26.3)
            (Qwen3-32B, 26.0)
            (Gemini3-Flash, 16.1)
        };
        \addlegendentry{1D Tasks}
        \addplot[fill=secondary, draw=none] coordinates {
            (Gemini3-Pro, 38.4)
            (GPT-5, 34.5)
            (GPT-5-Mini, 28.6)
            (Claude-Sonnet-4.5, 24.9)
            (O4-mini, 24.4)
            (GPT-5.2, 22.7)
            (Gemini-2.5-Flash, 21.6)
            (GLM-4.6, 17.4)
            (Deepseek-V3.2, 21.6)
            (Qwen3-32B, 14.6)
            (Gemini3-Flash, 9.8)
        };
        \addlegendentry{2D Tasks}
        \addplot[fill=quaternary, draw=none] coordinates {
            (Gemini3-Pro, 35.9)
            (GPT-5, 37.9)
            (GPT-5-Mini, 36.2)
            (Claude-Sonnet-4.5, 21.0)
            (O4-mini, 32.1)
            (GPT-5.2, 22.7)
            (Gemini-2.5-Flash, 22.2)
            (GLM-4.6, 16.9)
            (Deepseek-V3.2, 14.3)
            (Qwen3-32B, 17.2)
            (Gemini3-Flash, 13.4)
        };
        \addlegendentry{3D Tasks}
        \end{axis}
    \end{tikzpicture}
    \caption{\textbf{Model Performance Across Logical Dimensions.} Accuracy of LLMs on 1D, 2D, and 3D abstract reasoning tasks. A consistent performance dip on 2D tasks (red bars) is observable across most models.}
    \label{fig:dim_analysis}
\end{figure*}

\section{Results and Analysis}
\label{sec:results}

Our evaluations expose three critical boundaries of current LLMs' reasoning abilities. First, current LLMs demonstrate fundamental deficiencies in abstract reasoning, significantly underperforming humans. Second, current LLMs fall far short of 2D and 1D in the complexity of generated 3D tasks, revealing their lack of understanding of high-dimensional tasks. Third, counterintuitively, inputs with higher information complexity can simplify the reasoning process.

\subsection{Reasoning Limitations}
\label{subsec:reasoning_limitations}

Current LLMs show significant deficiencies in abstract reasoning. As shown in Table \ref{tab:main_results_leaderboard}, the top-performing model, Gemini3-Pro, achieves only 40.9\% overall accuracy. On a representative subset, models perform much worse than humans: 39.8\% versus 68.5\% (Appendix Table \ref{tab:human_vs_gemini}).

The poor performance stems from abstraction failure, not execution errors. Figure \ref{fig:failure_profile} shows that Abstraction Failure is the dominant error type across all models. Moreover, even when models produce correct answers, many rely on Surface Fitting or use an Inferior Rule rather than true generalization. This means abstraction failures cause both wrong answers and flawed reasoning behind correct ones.

Symbol remapping exposes this weakness further. The Symbolic Dependency gap ($\Delta_S$) in Table \ref{tab:main_results_leaderboard} quantifies the effect: models like GPT-5 drop 17.7\% on remapped tasks, confirming over-reliance on familiar symbols rather than abstract structure. This weakness appears in both solving and generating tasks.

\subsection{Dimensionality Bottleneck}
\label{subsec:dim_analysis}

Author models struggle with higher-dimensional tasks, which affects the difficulty of generated problems. This leads to an unexpected pattern: all solver models consistently perform worse on 2D tasks than on 3D tasks, creating a 1D $>$ 3D $>$ 2D hierarchy (Figure \ref{fig:dim_analysis}). This V-shaped curve reveals a capability ceiling in the author models, not the intrinsic difficulty of 3D reasoning.

To understand this, we analyzed the generated code complexity. Appendix Table \ref{tab:ast_complexity} shows that 2D tasks contain deeper conditional logic than 3D tasks—for example, O4-mini's 2D tasks have a Nested If Depth of 2.33 versus 1.40 for 3D. This suggests author models hit a cognitive trade-off: when handling 3D spatial complexity, they must simplify the underlying logic to maintain validity. This forced simplification makes 3D tasks accidentally easier for solvers.

Beyond dimensionality, task expansion revealed another unexpected pattern in how input structure affects reasoning difficulty.

\begin{figure}[t!]
    \centering
    \begin{tikzpicture}
        \begin{axis}[
            width=0.45\textwidth,
            height=4.5cm,
            xlabel={\textbf{Problem Variation (V0 - V9)}},
            symbolic x coords={V0, V1, V2, V3, V4, V5, V6, V7, V8, V9},
            xtick=data,
            xticklabel style={font=\scriptsize},
            xlabel style={font=\small},
            axis y line*=left,
            ymin=0, ymax=5.0,
            ylabel={\textcolor{sapphire}{\textbf{Info. Complexity (Ratio)}}},
            ylabel style={font=\small, color=sapphire},
            y tick label style={color=sapphire, font=\scriptsize},
            ymajorgrids=true,
            grid style={dashed, gray!20},
            legend style={
                at={(0.5,1.20)},
                anchor=north,
                legend columns=2,
                draw=none,
                font=\scriptsize,
                column sep=5pt
            }
        ]
        
        \addplot[thick, color=sapphire, mark=*, mark options={fill=sapphire!10, scale=0.8}] coordinates {
            (V0, 1.156) (V1, 1.181) (V2, 1.179) (V3, 1.182)
            (V4, 4.286) (V5, 3.346) (V6, 2.781)
            (V7, 0.769) (V8, 0.641) (V9, 0.565)
        };
        \addlegendentry{Info. Complexity}
        
        \addlegendimage{thick, color=secondary, mark=diamond*, mark options={fill=secondary!10, scale=0.8}, dotted}
        \addlegendentry{Accuracy}
        \end{axis}

        \begin{axis}[
            width=0.45\textwidth,
            height=4.5cm,
            symbolic x coords={V0, V1, V2, V3, V4, V5, V6, V7, V8, V9},
            xtick=data,
            axis x line=none,
            xticklabels={},
            axis y line*=right,
            ymin=0, ymax=60,
            ylabel={\textcolor{secondary}{\textbf{Accuracy (\%)}}},
            ylabel style={font=\small, color=secondary},
            y tick label style={color=secondary, font=\scriptsize}
        ]
        \addplot[thick, color=secondary, mark=diamond*, mark options={fill=secondary!10, scale=0.8}, dotted] coordinates {
            (V0, 19.97) (V1, 16.11) (V2, 16.78) (V3, 15.84)
            (V4, 54.83) (V5, 48.48) (V6, 47.26)
            (V7, 7.98) (V8, 7.49) (V9, 5.94)
        };
        \end{axis}
        
    \end{tikzpicture}
    \caption{\textbf{The Augmentation Paradox.} A multi-axis plot correlating Information Complexity (\textcolor{sapphire}{blue}) and Model Accuracy (\textcolor{secondary}{red}) across problem variations.}
    \label{fig:entropy_paradox}
\end{figure}

\subsection{Augmentation Paradox}
\label{subsec:entropy}

Our findings reveal an unexpected pattern: certain generated variations proved paradoxically easier to solve despite higher input complexity. Figure \ref{fig:entropy_paradox} shows this relationship clearly. We measured input complexity using compression ratio—higher ratios indicate more structured, less compressible inputs. The figure reveals a striking correlation: as input complexity peaks, model accuracy rises sharply.

The pattern is clearest at variation V4, where input complexity peaks at 4.29 yet accuracy jumps to 54.8\%—nearly triple the baseline. Appendix Table~\ref{tab:complexity_failure_analysis} corroborates this via failure entropy (Shannon entropy over incorrect output distributions): V4 exhibits the lowest entropy (1.53 bits), indicating consistent rather than random errors. This suggests that highly structured inputs constrain the space of plausible rules, facilitating correct rule identification.

The mechanism is straightforward: highly structured inputs reduce ambiguity. They provide strong cues about the underlying transformation, limiting the number of rules that fit the examples. This finding confirms a key principle for benchmark design—true reasoning difficulty comes from rule ambiguity, not surface complexity.

\subsection{Limitations and Future Work}
\label{subsec:limitations}

While our framework ensures verifiability, its limitations suggest clear future directions. Generated task complexity is bounded by the author LLM's generative ceiling, addressable by employing stronger models for more diverse, sophisticated rules. Benefiting from recent progress in quantization~\cite{ma2024affinequant,ma2023ompq,ma2023solving,ma2024outlier}, pruning~\cite{zheng2021information}, and inference acceleration~\cite{ma2026flow}, future author LLMs may further reduce the cost of generating and verifying more diverse tasks. Our analysis relies on an analyst LLM to approximate Occam's Razor, which could be replaced by more direct metrics for reasoning quality—potentially via mechanistic interpretability of internal model states \cite{venhoff2025understanding}. Beyond these refinements, the code-verification paradigm extends to other domains and provides a controlled testbed for investigating foundational phenomena. Extending the framework beyond bijective rules to many-to-one reasoning tasks is an important future direction. Preliminary fine-tuning results in Appendix~\ref{subsec:lora_finetuning} further suggest that A$^2$RBench may also serve as a training signal for answer convergence, output alignment, and discrete rule discrimination.

\section{Conclusion}
\label{sec:conclusion}

We tackled the challenge of scalable abstract reasoning evaluation through an automated paradigm combining LLM generation with code verification. To address generative hallucination, we established a theoretical framework proving that cycle consistency ($g(f(x))=x$) guarantees problem well-posedness, enabling deterministic validation of logical soundness. Building on this foundation, we implemented a automated pipeline that generates, verifies, expands, evaluates, and analyzes tasks, yielding A$^2$RBench. Our work establishes a rigorous paradigm for scalable, formally verified cognitive evaluation of LLMs.

\section*{Impact Statement}
This paper presents work whose goal is to advance the field of Machine Learning. There are many potential societal consequences of our work, none which we feel must be specifically highlighted here.

\section*{Acknowledgements}
This work is supported by the National Key Research and Development Program of China (No. 2025YFE0113500), the National Natural Science Foundation of China (No. 62576299), and the Fundamental Research Funds for the Central Universities.
\bibliographystyle{icml2026}
\bibliography{reference} 

\newpage
\onecolumn
\appendix

\section{Proofs and Supplementary Theory}
\label{sec:proofs}

\subsection{Proof of Theorem \ref{thm:cycle_consistency}}
\label{subsec:proof_cycle_consistency}

The theorem asserts that if a task defined by a pair of generated functions, $(f, g)$, satisfies the cycle consistency property, then it is a well-posed problem. We must demonstrate that this satisfaction implies the three criteria for well-posedness: uniqueness, consistency, and verifiability.

Let the forward function be $f: \mathcal{X} \to \mathcal{Y}$ and the inverse function be $g: \mathcal{Y} \to \mathcal{X}$, where $\mathcal{X}$ and $\mathcal{Y}$ are the spaces of task inputs and outputs, respectively. The cycle consistency check programmatically verifies that for a set of test inputs $x \in \mathcal{X}_{\text{test}} \subset \mathcal{X}$, the following holds:
\begin{equation}
    g(f(x)) = x
\end{equation}
This establishes that the composition $g \circ f$ is the identity map on the tested subset of $\mathcal{X}$. Our generation framework assumes this property holds for the entire domain $\mathcal{X}$. We now prove each condition for well-posedness.
Strictly speaking, cycle consistency establishes injectivity of $f$ on the task-defined input space; together with the generation process, which evaluates outputs in the image of $f$, the resulting invertibility claim is over this image set rather than over an arbitrary unrestricted codomain.

\begin{enumerate}
    \item \textit{Proof of Uniqueness:} A task has a unique solution if the function $f$ maps every input $x \in \mathcal{X}$ to exactly one output $y \in \mathcal{Y}$. This is guaranteed if $f$ is a well-defined function. The more rigorous aspect of uniqueness in our context is ensuring that no two distinct inputs map to the same output, which would make the rule ambiguous from an inverse perspective. The cycle consistency condition $g(f(x)) = x$ is sufficient to prove that $f$ is injective (one-to-one). To show this, assume for any two inputs $x_1, x_2 \in \mathcal{X}$ that $f(x_1) = f(x_2)$. Applying the function $g$ to both sides gives $g(f(x_1)) = g(f(x_2))$. By the cycle consistency property, this simplifies to $x_1 = x_2$. Thus, $f(x_1) = f(x_2)$ implies $x_1 = x_2$, which is the definition of injectivity. An injective function ensures that for any given input $x_q$, its output $y_q = f(x_q)$ is uniquely determined and can be mapped back to $x_q$ alone. This satisfies the uniqueness criterion.

    \item \textit{Proof of Consistency:} A task must be consistent, meaning the underlying rule $f$ applies to all provided example pairs $(x_i, y_i) \in E$ and to the final question-answer pair $(x_q, y_q)$. In our framework, the example set $E$ and the ground-truth answer $y_q$ are not supplied externally but are generated programmatically *after* the function $f$ has been defined and verified. The generation process for any example output $y_i$ is a deterministic execution:
    \begin{equation}
        y_i := f(x_i), \quad \forall i \in \{1, \dots, k\}
    \end{equation}
    Since each output $y_i$ is the direct result of applying the function $f$ to the input $x_i$, the condition $y_i = f(x_i)$ holds by construction (\textit{ex constructione}). The same applies to the final answer $y_q = f(x_q)$. Therefore, consistency between the rule and all task components is structurally guaranteed by the generation pipeline itself.

    \item \textit{Proof of Verifiability:} A task must be verifiable, meaning its logical soundness can be confirmed through a deterministic procedure. The cycle consistency check is precisely this procedure. It is a deterministic, executable program that takes the generated functions $f$ and $g$ as input and tests the property $g(f(x)) = x$. The successful completion of this check serves as a formal, computational certificate of the rule's core logical property—its invertibility via $g$. This programmatic verification replaces subjective human judgment or fallible model-based evaluation with a deterministic and objective confirmation of logical soundness.
\end{enumerate}
Having satisfied all three conditions, any task whose generating functions $(f, g)$ pass the cycle consistency check is guaranteed to be a well-posed problem. \qed


\lstset{
  basicstyle=\ttfamily\small, 
  breaklines=true,            
  postbreak=\mbox{\textcolor{red}{$\hookrightarrow$}\space}, 
  frame=single,               
  framerule=0.5pt,
  framesep=5pt,
  backgroundcolor=\color{gray!5}, 
  showstringspaces=false,
  tabsize=2,
  captionpos=b,
  aboveskip=1em,
  belowskip=1em
}

\subsection{Examples of Invalid Function Mappings}
\subsubsection{One-to-Many Mapping: Violating Uniqueness}
\label{subsec:one_to_many_example}
A one-to-many mapping violates the Uniqueness criterion because a 
single input can produce multiple valid outputs, making the correct 
answer ambiguous.
\begin{lstlisting}[language=Python, caption={A non-deterministic rule 
that violates uniqueness.}]
import random
def f(x):
    # Non-deterministic: returns a random choice
    return random.choice([x, x+1, x+2])
# Example: f(5) could be 5, 6, or 7
# The "correct" answer is undefined
\end{lstlisting}
\subsubsection{Many-to-One Mapping: Challenging Verification}
\label{subsec:many_to_one_example}
A many-to-one mapping is deterministic but loses information, making 
it impossible to construct a unique inverse function.

\begin{lstlisting}[language=Python, caption={A hash-like rule with 
information loss.}]
def f(x):
    # Many inputs map to the same output
    return x % 10

# Example: f(15) = f(25) = f(35) = 5
# Given output 5, we cannot determine if input was 15, 25, or 35
# No unique inverse exists
\end{lstlisting}
\subsection{Example of Logically Inconsistent Generated Code}
\label{subsec:flawed_code_example}

Even when an LLM generates syntactically valid Python code, the 
implementation may contain logical inconsistencies that violate 
the cycle consistency requirement. Below is a real example from 
our generation attempts where the forward and inverse functions 
are incompatible.

\begin{lstlisting}[language=Python, caption={A flawed transformation 
pair that fails cycle consistency.}, label={lst:flawed_code}]
def rotate_left(s, k):
    n = len(s)
    if n == 0: return s
    k = k % n
    return s[k:] + s[:k]

def transform_grid(grid):
    start = grid.find('<')
    end = grid.find('>')
    if start != -1 and end > start:
        payload = grid[start+1:end]
        k = len(payload) % 3
        v = rotate_left(payload, k)
        return grid[:start] + "{" + str(k) + v + "}" + grid[end+1:]
    return grid

def inverse_transform_grid(grid):
    start = grid.find('{')
    end = grid.find('}')
    if start != -1 and end > start:
        k_val = int(grid[start+1])
        v = grid[start+2:end]
        n = len(v)
        if n == 0: return grid
        shift = k_val % n
        if shift == 0:
            u = v
        else:
            idx = shift - 1
            if idx <= 0:  # Logical error here
                u = v
            else:
                u = v[-idx:] + v[:-idx]  # Incorrect rotation
        return grid[:start] + "<" + u + ">" + grid[end+1:]
    return grid
\end{lstlisting}

\textbf{Analysis:} While both functions execute without runtime errors,the inverse function contains a logical flaw in its rotation logic. The cycle consistency check successfully detects this incompatibility, filtering out such tasks automatically.

\section{Methodology and Implementation Details}
\label{sec:appendix_methodology}

This section provides a comprehensive overview of the methodological framework and implementation specifics underpinning the A$^2$RBench generation and evaluation pipeline. We detail the experimental setup, including the models employed and their configurations. Furthermore, we present the complete prompt templates that steer the Large Language Models (LLMs) in their designated roles as Authors, Judges, Expanders, and Analysts, offering a transparent view into the mechanisms that drive our automated paradigm.

\subsection{Experimental Setup}
\label{subsec:appendix_setup}

Our experimental framework was designed to ensure both diversity in task generation and robustness in evaluation. A diverse set of Large Language Models served in distinct, specialized roles throughout the pipeline.

 The generation of the initial seed problems was performed by a cohort of four \textbf{Author} models to maximize rule diversity: GPT-5-Mini, Gemini-2.5-Pro, Gemini-2.5-Flash, and O4-mini. For the subsequent evaluation of reasoning capabilities, a broad spectrum of 14 \textbf{Solver} models was benchmarked, encompassing the Gemini, GPT, and Qwen series, as enumerated in Table \ref{tab:global_performance}. To maintain a consistent and high-quality standard for all auxiliary verification and analysis tasks, a single, powerful model, GPT-5-Mini, was exclusively employed for the roles of \textbf{Judge}, \textbf{Expander}, and \textbf{Analyst}.

All interactions with the LLMs were conducted via API calls, with temperature settings carefully calibrated to the specific nature of each task. We categorized tasks into two primary groups based on their required level of determinism.

\begin{itemize}
    \item \textbf{Tasks Requiring Creativity and Diversity ($T > 0$):} For generative processes where novelty and variation are desirable, a non-zero temperature was employed. This includes the initial rule invention by Author models (`generate rule', $T=0.5$) and the generation of new input variations by the Expander model (`make request' in task expansion, $T=0.5$). For code generation (`generate code'), which requires a balance between creativity in implementation and adherence to logical specifications, a slightly lower temperature of $T=0.2$ was used. The Judge model's preliminary review of rules and code (`judge rule' and `judge code and output') utilized a minimal temperature of $T=0.1$ to prevent overly repetitive feedback while maintaining evaluative consistency.

    \item \textbf{Tasks Requiring Determinism and Precision ($T \approx 0$):} For all evaluation and analysis tasks where a single, correct, and reproducible outcome is paramount, a near-zero temperature of $T=10^{-7}$ was applied. This ensures that the model's output is as deterministic as possible, eliminating randomness from the evaluation process. This setting was used for the Solver models when generating their final answers (`get answer'), for the Judge model's final binary correctness evaluation (`evaluate'), and for the Analyst model's cognitive process classification.
\end{itemize}

\subsection{Prompt Engineering Framework}
\label{subsec:appendix_prompts}

The core of our automated benchmark generation pipeline is a sophisticated prompt engineering framework that guides LLMs to perform specialized roles. Each prompt is carefully designed to provide clear instructions, context, and constraints, ensuring high-quality and logically consistent outputs at every stage. This subsection details the specific prompt templates used for the Author, Judge, Expander, and Analyst models.

\subsubsection{Author Model Prompts}

The Author model functions as the primary creative engine of the A$^2$RBench pipeline, tasked with conceiving novel, logically sound abstract rules. We decompose the generation process into two distinct interaction turns. In the first turn, the model operates as a ``Puzzle Designer," receiving a set of randomly sampled ``Inspiration Rules" to prompt the generation of a high-level natural language description of a bijective transformation. In the subsequent turn, the model shifts roles to that of a ``Software Architect," translating the approved description into a deterministic Python implementation. This separation of concerns mitigates the risk of hallucination by grounding the code generation in a pre-validated conceptual framework.

\begin{lstlisting}[caption={Prompt for Abstract Rule Conception}, label={lst:prompt_author_rule}]
System: You are a logical puzzle designer specializing in reversible algorithms. Your task is to invent a novel transformation rule and its corresponding inverse (decoding) rule.

User:
Inspiration Rules (FOR INSPIRATION ONLY):
---
{sampled_rules_str}
---
Your Task:
Invent a new rule for the task type: {task_type}.

CRITICAL DIMENSION REQUIREMENT:
{dimension_instructions}

CRITICAL REQUIREMENT: REVERSIBILITY (BIJECTIVITY)
You are designing a Encoder/Decoder pair.
1.  NO INFORMATION LOSS: The rule must not destroy information.
2.  Forward Rule: Describes how to transform Input A -> Output B.
3.  Inverse Rule: Describes how to verify/reconstruct Input A <- Output B exactly.

Output Format (Strict JSON):
{{
  "reasoning_of_creation": "My reasoning for this reversible {dimensionality} rule...",
  "rule_description": "Clear description of the Forward transformation (Input -> Output).",
  "inverse_rule_description": "Clear description of the Inverse transformation (Output -> Input)."
}}
\end{lstlisting}

\begin{lstlisting}[caption={Prompt for Rule Implementation}, label={lst:prompt_author_code}]
System: You are a meticulous Chief Software Architect. Your goal is to implement a perfect, reversible transformation system in Python. Your output must be self-contained and valid JSON.

User:
Primary Mission: Implement the provided rule as a pair of Python functions.

The Rules You MUST Implement:
1.  Forward Rule: `{rule_description}`
2.  Inverse Rule: `{inverse_rule_description}`

Part 1: The Twin Functions
You must implement TWO functions:
1.  `def transform_grid(grid):` -> Returns the transformed output.
2.  `def inverse_transform_grid(grid):` -> Returns the exact original input*.

Part 2: Robustness
   Handle edge cases gracefully.
   Cycle Consistency: The logic MUST satisfy `inverse_transform_grid(transform_grid(x)) == x`.

Output Format (Strict JSON with Python string):
{{
  "reasoning": "I have implemented both functions and verified the cycle consistency...",
  "python_code": "import json\n\n# 1. Forward Function\ndef transform_grid(grid):\n..."
}}
\end{lstlisting}

\subsubsection{Judge Model Prompts (\texttt{M\_judge})}

The Judge model acts as a multi-stage quality gate throughout the generation and evaluation process. Its prompts are tailored to three distinct validation tasks: preliminary rule validation, integrated code and puzzle validation, and final answer verification.

After the Author model proposes a rule, the Judge performs an initial logic check to filter out proposals that are ambiguous, non-reversible, or logically flawed. This preemptive check saves computational resources by preventing the generation of code for invalid concepts.

\begin{lstlisting}[caption={Prompt for Preliminary Rule Validation}, label={lst:prompt_judge_rule}]
System: You are a strict logic judge. Your primary task is to assess if a proposed rule pair is logical, unambiguous, and truly reversible (lossless). Respond in JSON with {'is_valid': boolean, 'reasoning': '...'}.

User:
Review this rule pair.

Forward: `{rule_description}`
Inverse: `{inverse_rule_description}`

Is this rule pair logically sound and verifiably reversible?
\end{lstlisting}

Once a rule is approved and code is generated, the Judge conducts a more comprehensive review. This prompt, used in the `judge code and output' function, instructs the model to verify three key aspects: a) the consistency between the natural language rule and its Python implementation, b) the solvability of the puzzle based on the provided examples, and c) the non-triviality of the task.

\begin{lstlisting}[caption={Prompt for Integrated Code and Puzzle Validation}, label={lst:prompt_judge_code}]
System: You are a pragmatist logic judge. Your job is to verify that the generated puzzle is solvable and clearly defined.

User:
**Validation Task**
We have a puzzle generated by Python code.

**Rule Definitions:**
- Forward: `{rule_description}`
- Inverse: `{inverse_rule_description}`

**Python Code:**
```python
{python_code}
```

**Generated Puzzle Data:**
```json
{code_output_str}
```

**Your Judgement Criteria:**
1.  **Consistency:** Does the Python code actually implement the Forward and Inverse rules described in natural language?
2.  **Solvability:** Are the provided examples sufficient for a human or AI to deduce the rule? (i.e., The rule isn't "random" or hidden inside the code without external logic).
3.  **Quality:** Is the puzzle non-trivial and interesting?

**Output Format (Strict JSON):**
```json
{{
  "reasoning": "A concise analysis of rule-to-code consistency, example quality, and overall puzzle non-triviality.",
  "is_valid": <true or false>
}}
```
\end{lstlisting}

During the evaluation phase, the Judge's role simplifies to a direct comparison between the solver model's output and the ground-truth answer. This prompt is designed to be maximally constrained, focusing solely on factual correctness to ensure objective and reproducible scoring.

\begin{lstlisting}[caption={Prompt for Final Answer Verification}, label={lst:prompt_judge_answer}]
System: You are a meticulous and impartial AI judge. Your task is to determine if a model's answer is correct based on the provided ground truth. Your output MUST be a single, raw JSON object.

User:
Puzzle Information:
- Rule Description: {rule_description}
- Question: {question_text}

Ground Truth (The official correct answer):
`{ground_truth}`

Model's Answer:
`{model_answer}`

Judging Instructions:
1.  Primary Goal: Determine if the `Model's Answer` is equivalent to the `Ground Truth`.
2.  Equivalence: The answer is correct if it is an exact, character-for-character match.
3.  Reasoning vs. Answer: Your judgment must be based exclusively on the final answer provided, not the model's reasoning process.

Output Format (Strictly adhere to this JSON structure):
{{
    "justification": "<A brief, one-sentence explanation for your decision.>",
    "is_correct": <true or false>
}}
\end{lstlisting}

\subsubsection{Expander Model Prompt (\texttt{M\_expander})}
The core responsibility of the \texttt{M\_expander} model is to perform systematic data augmentation on each validated seed problem, generating a diverse set of variations designed to probe a solver model's generalization capabilities. To this end, we engineered a structured prompt that instructs the LLM to adopt the persona of a Quality Assurance (QA) Engineer, whose goal is to rigorously test a given rule. 

The prompt template, provided in Listing, is designed to guide the LLM's generation process with precision.
\begin{lstlisting}[caption={Prompt for Task Expansion}, label={lst:prompt_expander}]
System: You are a Lead QA Engineer specializing in Abstract Logic Puzzles.
Task: Your job is to generate NEW, DIVERSE input test cases for this rule.
---
Rule Information:
   Rule Description: {rule_description}

Reference Code (Do not modify, use logic for reference):```python
{python_code}
```
Existing Input History (DO NOT REPEAT THESE):
{input_history}
---
Current Objective (Variation {variation_index}/9):
{variation_guidance}
---
Constraints:
1.  Validity: The new input MUST be valid for the provided code.
2.  Diversity: The new input must be strictly different from any in the History.
3.  Low Entropy Check: If the rule is too restrictive to generate new inputs, you MUST return status "SKIPPED_LOW_ENTROPY".

Output Format (Strict JSON):
{{
  "reasoning": "Explain your strategy for this test case (e.g., testing empty edge case).",
  "variation_type": "Label for this case (e.g., 'Edge_Case_Empty', 'Adversarial_Pattern')",
  "new_input": <YOUR_NEW_INPUT_HERE>,
  "status": "CONTINUE"  // or "SKIPPED_LOW_ENTROPY"
}}
\end{lstlisting}

The prompt is composed of four key components to maximize the quality and relevance of the generated data:

\paragraph{1. Context and Grounding.}
To ensure logical fidelity, the prompt grounds the LLM in two ways. First, the \textbf{Role-Playing Persona} of a ``Lead QA Engineer" conditions the model to shift its operational mode from pure generation to rigorous testing, encouraging it to consider edge cases and adversarial scenarios. Second, and most critically, the \textbf{Code as a Logical Oracle} provides the validated Python implementation of the rule. This serves as an unambiguous, deterministic ground truth, compelling the model to generate new inputs that are strictly valid for the given implementation and thereby mitigating the risk of producing fallacious data due to model hallucination.

\paragraph{2. Dynamic Objective and Phased Strategy.}
This is the core mechanism for systematic augmentation. Rather than allowing for unconstrained generation, we dynamically inject specific procedural guidance via the \texttt{variation\_guidance} placeholder. This guidance is determined by the current \texttt{variation\_index} (1 through 9) and follows a three-phase strategy:
\begin{itemize}
    \item \textbf{Early Stage (V1-V3):} Focuses on \textit{Standard Variations} to verify baseline comprehension of the rule's core logic.
    \item \textbf{Mid Stage (V4-V6):} Emphasizes \textit{Edge Cases} (e.g., empty structures, singletons, repetitive elements) to test the solver's robustness.
    \item \textbf{Late Stage (V7-V9):} Targets \textit{Complex or Adversarial Cases} to stress-test the solver's reasoning capabilities with inputs of higher complexity or tricky patterns.
\end{itemize}
This phased approach ensures that the augmented dataset is both structured in its progression of difficulty and diverse in its coverage of the problem space.

\paragraph{3. State Management and Constraints.}
To maintain high data quality, the prompt incorporates two critical constraints. First, by providing the \textbf{Input History}, it explicitly instructs the model to generate novel inputs, preventing redundancy. Second, the \textbf{Low-Entropy Escape Hatch} serves as a crucial fail-safe mechanism. For rules that are logically restrictive (e.g., ``the input must be the exact string `ABC'"), generating nine unique variations is impossible. This constraint allows the model to gracefully exit by reporting a \texttt{SKIPPED\_LOW\_ENTROPY} status, which prevents futile retry loops and enhances the overall robustness of the generation pipeline.

\paragraph{4. Structured Output.}
The prompt mandates a strict JSON output format to allow for reliable, automated parsing and validation. The inclusion of \texttt{reasoning} and \texttt{variation\_type} fields provides valuable metadata, enabling detailed downstream analysis of both the generated data and the augmentation model's strategic approach.

In summary, this prompt design channels the generative capabilities of the LLM within a rigorous and deterministic framework. This enables the efficient and systematic generation of high-quality, diverse, and logically-guaranteed test cases for our benchmark.
\subsection{Analyst Model Prompt (\texttt{M\_analyst})}
The Analyst model serves as a diagnostic instrument designed to look beyond binary accuracy metrics and evaluate the intrinsic quality of the reasoning process. To operationalize the theoretical distinction between superficial pattern matching and genuine algorithmic induction (as discussed in Section \ref{sec:theory}), this prompt instructs the model to dissect the solver's Chain-of-Thought (CoT). 

\begin{lstlisting}[caption={Prompt for Cognitive Process Analysis}, label={lst:prompt_analyst}]
System: You are a meticulous and insightful AI cognitive analyst. Your mission is to dissect a model's thought process to understand not just if it was right, but how it arrived at its conclusion. You must follow the structured classification guide below.

User:
You are given a case file containing a puzzle, the ground truth, and a model's attempt to solve it. Your job is to perform a two-part analysis.

Case File:
- Original Rule Description: {rule_description}
- Ground Truth Answer: {ground_truth}
- Model's Answer: {model_answer}
- Model's Reasoning (Chain of Thought):
{model_cot}

PART 1: Outcome & Reasoning Analysis Guide
PATH A: IF THE MODEL'S ANSWER IS INCORRECT, choose ONE primary cause:
- Abstraction_Failure-Operator_Inference
- Abstraction_Failure-Scope_Condition
- Reasoning_Failure-Procedural_Error
- Format_Or_Collapse-Reasoning_Collapse

PATH B: IF THE MODEL'S ANSWER IS CORRECT, analyze reasoning quality:
- Success-Type_A-Surface_Fitting
- Success-Type_B-Inferior_Rule
- Success-Type_C-Correct_Generalization

PART 2: Reasoning Style Analysis Guide
Independently classify the style of the CoT. Choose ONE:
- Style-Direct_Deduction
- Style-Hypothesis_Testing
- Style-Chaotic_Guessing

Your Response (Strict JSON format):
{{
  "justification": "A brief, one-sentence explanation for your choices.",
  "outcome_category": "CHOSEN_CATEGORY_FROM_GUIDE",
  "reasoning_style": "CHOSEN_STYLE_FROM_GUIDE"
}}
\end{lstlisting}

\onecolumn 

\lstset{
  basicstyle=\ttfamily\small,
  breaklines=true,
  postbreak=\mbox{\textcolor{red}{$\hookrightarrow$}\space},
  frame=single,
  framerule=0.5pt,
  framesep=5pt,
  backgroundcolor=\color{gray!5},
  showstringspaces=false,
  tabsize=2,
  captionpos=b,
  aboveskip=1em,
  belowskip=1em
}

\section{The A$^2$RBench Dataset}
\label{sec:appendix_dataset}

This section details the composition and statistical properties of the A$^2$RBench dataset, the primary artifact of our automated generation framework. We first outline its hierarchical structure and scale, followed by a presentation of specific task examples that illustrate the diversity of the benchmark.

\subsection{Dataset Composition and Statistics}
\label{subsec:appendix_dataset_composition}

A$^2$RBench is constructed as a stratified corpus designed for the rigorous evaluation of algorithmic reasoning across multiple axes of complexity. The final dataset comprises a total of $N_{\text{total}} = 1054$ evaluation items, which are organized into two evaluation protocols: a primary set for assessing logical inference ($P_0$), and a diagnostic set for probing symbolic dependency ($P_1$).

The benchmark's architecture is hierarchical. Its foundation is a set of $|T_{\text{seed}}| = 72$ unique, code-verified seed tasks (designated V0). Each seed represents a distinct, formally validated abstract rule. Through the systematic augmentation pipeline outlined in Section \ref{sec:methodology}, these seeds are programmatically expanded into $|T_{\text{aug}}| = 631$ augmented instances (V1--V9). This process yields a comprehensive base library for the primary evaluation protocol, totaling $|P_0| = 703$ unique reasoning problems. To facilitate the diagnostic analysis of symbolic dependency, the $P_1$ protocol was created by applying isomorphic symbol perturbations to all tasks within the Symbolic domain of $P_0$. This results in an additional $|P_1| = 351$ evaluation items, establishing a one-to-one task for each symbolic task.

A defining characteristic of A$^2$RBench is its principled stratification across both logical dimensions and semantic domains. Tasks are categorized into two primary domains: Symbolic Rules, which operate on the structure and arrangement of abstract tokens, and Semantic Rules, which require the application of external world knowledge (e.g., lexical sequences or arithmetic properties). As detailed in Table \ref{tab:dataset_stats}, this domain-level division is cross-balanced with a nearly uniform distribution across three spatial dimensions: 1D (Sequence), 2D (Grid), and 3D (Voxel). This balanced design, with 237 tasks in 1D, 237 in 2D, and 229 in 3D, is critical for isolating model capabilities from data distribution biases, thereby ensuring that observed performance variations, such as the generative bottleneck discussed in Section \ref{sec:results}, can be confidently attributed to the models themselves.

\begin{table}[h]
    \centering
    \caption{\textbf{Distribution of Reasoning Tasks in the $P_0$ Protocol.} The dataset maintains a balanced stratification across both rule types (Symbolic vs. Semantic) and spatial dimensions (1D, 2D, 3D), minimizing confounding variables in the evaluation of reasoning stability.}
    \label{tab:dataset_stats}
    \begin{sc}
    \begin{tabular}{lcccc}
        \toprule
        \textbf{Rule Type} & \textbf{1D (Seq)} & \textbf{2D (Grid)} & \textbf{3D (Voxel)} & \textbf{Total} \\
        \midrule
        Symbolic Rule & 117 & 120 & 114 & \textbf{351} \\
        Semantic Rule & 120 & 117 & 115 & \textbf{352} \\
        \midrule
        \textbf{Total} & \textbf{237} & \textbf{237} & \textbf{229} & \textbf{703} \\
        \bottomrule
    \end{tabular}
    \end{sc}
\end{table}

\subsection{Exemplary Tasks from the A$^2$RBench Benchmark}
\label{subsec:appendix_examples}

To provide a concrete illustration of the tasks generated by our framework, this section presents six representative examples authored by the \texttt{Author\_O4\_mini} model. These examples are selected to showcase the diversity of the benchmark, spanning both symbolic and semantic rule types across one, two, and three dimensions. Each task is presented in its raw JSON format, encapsulating the set of examples required for rule induction, alongside the specific question and its ground-truth answer, all programmatically verified for logical consistency via the Cycle Consistency constraint.

\subsubsection{Symbolic Transformation Rules}

Symbolic tasks operate on the structure and arrangement of input symbols, without reference to their external meaning.

This one-dimensional task involves a structural rearrangement rule that interleaves the two halves of an input sequence.
\begin{lstlisting}[caption={A 1D symbolic task involving sequence interleaving.}, label={lst:sym_d1}]
{
  "examples": [
    {"input": [], "output": []},
    {"input": ["x"], "output": ["x"]},
    {"input": ["a", "b", "c", "d"], "output": ["c", "a", "d", "b"]},
    {"input": ["a", "b", "c", "d", "e"], "output": ["d", "a", "e", "b", "c"]}
  ],
  "question_plaintext": ["p", "y", "t", "h", "o", "n", "3"],
  "answer_ciphertext": ["o", "p", "n", "y", "3", "t", "h"]
}
\end{lstlisting}

The two-dimensional symbolic task applies a local pairwise swap within every non-overlapping 2×2 block: the top-right and bottom-left elements exchange positions, while the top-left and bottom-right remain fixed. Rows or columns that do not form complete 2×2 blocks are left unchanged.

\begin{lstlisting}[caption={A 2D symbolic task based on block-wise diagonal swaps.}, label={lst:sym_d2}]
{
  "examples": [
    {"input": [], "output": []},
    {"input": [[1]], "output": [[1]]},
    {"input": [[1, 2], [3, 4]], "output": [[1, 3], [2, 4]]},
    {"input": [[1, 2, 3], [4, 5, 6]], "output": [[1, 4, 3], [2, 5, 6]]},
    {"input": [[1, 2, 3], [4, 5, 6], [7, 8, 9]], 
     "output": [[1, 4, 3], [2, 5, 6], [7, 8, 9]]}
  ],
  "question_plaintext": [
    [1, 2, 3, 4],
    [5, 6, 7, 8],
    [9, 10, 11, 12],
    [13, 14, 15, 16],
    [17, 18, 19, 20]
  ],
  "answer_ciphertext": [
    [1, 5, 3, 7],
    [2, 6, 4, 8],
    [9, 13, 11, 15],
    [10, 14, 12, 16],
    [17, 18, 19, 20]
  ]
}
\end{lstlisting}

\subsubsection{Semantic Transformation Rules}

Semantic tasks require the application of external knowledge or meaning associated with the symbols to derive the correct transformation.

This semantic task requires external knowledge, mapping each letter to a chemical element symbol where the letter's alphabetical position corresponds to the element's atomic number.
\begin{lstlisting}[caption={A 1D semantic task mapping letters to chemical elements via atomic number.}, label={lst:sem_d1}]
{
  "examples": [
    {"input": ["CAT"], "output": ["LiHCa"]},
    {"input": ["HELLO"], "output": ["OBMgMgP"]}
  ],
  "question_plaintext": ["BACK", "CAGE"],
  "answer_ciphertext": ["HeHLiNa", "LiHNB"]
}
\end{lstlisting}

The two-dimensional semantic rule is an involution that combines a 180-degree spatial rotation of the grid with a mirrored character substitution cipher applied to letters, digits, and paired brackets.
\begin{lstlisting}[caption={A 2D semantic task integrating 180-degree rotation with a mirror cipher.}, label={lst:sem_d2}]
{
  "examples": [
    {"input": ["AB", "CD"], "output": ["WX", "YZ"]},
    {"input": ["A"], "output": ["Z"]}
  ],
  "question_plaintext": ["ABC123XYZ0", "12345-6789", "()[]{}<>!?"],
  "answer_ciphertext": ["?!<>{}[]()", "0123-45678", "9ABC678XYZ"]
}
\end{lstlisting}

The three-dimensional semantic task integrates a 90-degree clockwise rotation about the vertical axis with a subsequent application of the Atbash cipher to all alphabetic characters within the cube.
\begin{lstlisting}[caption={A 3D semantic task combining a Y-axis rotation with the Atbash cipher.}, label={lst:sem_d3}]
{
  "examples": [
    {"input": [[["M"]]], "output": [[["N"]]]},
    {"input": [[["A", "b"], ["C", "1"]], [["x", "!"], ["Z", "z"]]], "output": [[["c", "Z"], ["A", "X"]], [["!", "y"], ["a", "1"]]]}
  ],
  "question_plaintext": [[["A", "b"], ["C", "1"]], [["x", "!"], ["Z", "z"]]],
  "answer_ciphertext": [[["c", "Z"], ["A", "X"]], [["!", "y"], ["a", "1"]]]
}
\end{lstlisting}

\subsection{Additional Task Examples}
\label{subsec:appendix_additional_examples}
This section presents two supplementary examples to further illustrate the diversity and complexity spectrum of the A$^2$RBench benchmark. Both tasks are authored by the \texttt{Author\_O4\_mini} model and represent problems within the two-dimensional space. The first demonstrates a composite structural transformation through sequential row and column operations, while the second exemplifies the integration of cryptographic primitives with spatial manipulation. As with all tasks in our benchmark, these problems have been formally verified through programmatic cycle consistency checks.

This two-dimensional symbolic task applies a composition of circular shifts: first, each row $i$ is rotated right by $(i \bmod N)$ positions; then, each column $j$ in the resulting grid is rotated downward by $(j \bmod M)$ positions. The transformation requires tracking both row-wise and column-wise index-dependent operations.

\begin{lstlisting}[caption={A 2D symbolic task combining row and column circular shifts.}, label={lst:sym_d2_double_rotation}]
{
  "examples": [
    {"input": [], "output": []},
    {"input": [["X"]], "output": [["X"]]},
    {"input": [["a", "b", "c"], ["d", "e", "f"], ["g", "h", "i"]], 
     "output": [["a", "i", "e"], ["f", "b", "g"], ["h", "d", "c"]]}
  ],
  "question_plaintext": [["t", "h", "i", "s"], 
                         ["i", "s", "a", "t"], 
                         ["e", "s", "t", "g"]],
  "answer_ciphertext": [["t", "g", "s", "s"], 
                        ["t", "h", "e", "a"], 
                        ["t", "i", "i", "s"]]
}
\end{lstlisting}

This two-dimensional semantic task integrates cryptographic transformation with spatial manipulation: each character undergoes an Atbash cipher for letters (A$\leftrightarrow$Z, a$\leftrightarrow$z) and digit complement for numbers (0$\leftrightarrow$9), followed by a 180-degree rotation of the entire grid (reversing both row order and character order within each row). The transformation is involutive.

\begin{lstlisting}[caption={A 2D semantic task combining Atbash cipher with 180-degree rotation.}, label={lst:sem_d2_atbash_rotation}]
{
  "examples": [
    {"input": ["ABC", "xyz", "123"], 
     "output": ["678", "abc", "XYZ"]},
    {"input": ["Hello, World!", "Atbash 123?"], 
     "output": ["?678 shzygZ", "!woilD ,loovS"]}
  ],
  "question_plaintext": ["FooBar", "Baz123!", ""],
  "answer_ciphertext": ["", "!678azY", "izYllU"]
}
\end{lstlisting}

\subsection{Cost Analysis}
\label{subsec:cost_analysis}

A critical advantage of our automated generation paradigm is its economic efficiency. Traditional reasoning benchmarks face a fundamental trade-off between scale and cost: manually designed benchmarks like ARC require substantial time investment from domain experts, while large-scale datasets such as GSM8K necessitate professional annotators at rates of tens of dollars per hour. In contrast, our pipeline leverages LLM-based generation with formal verification, achieving scalability at a fraction of the cost.

Table~\ref{tab:cost_comparison} presents a comparative cost analysis across three representative benchmarks. Since ARC and GSM8K do not publicly disclose their annotation costs, we estimate based on typical expert annotation rates (\$50/hour) and task complexity. ARC's visual-spatial puzzles require extensive design time (30-60 minutes per task), yielding an estimated cost of \$25-\$50 per problem. GSM8K's mathematical word problems, while less complex, still demand professional annotation at approximately \$8-\$12 per task. In stark contrast, our automated pipeline achieves dramatically lower costs through systematic reuse of validated rules.

Seed generation, which involves multi-stage LLM verification (rule design, code implementation, and cycle consistency checks), costs \$0.19 per task. Once a seed is validated, however, the expansion phase becomes remarkably efficient: generating 631 augmented variations costs merely \$0.005 per task—a 38$\times$ reduction. Averaged across the entire dataset of 1,054 tasks, the cost is \$0.016 per problem. This economic efficiency, combined with formal guarantees of logical soundness, establishes a scalable paradigm for benchmark construction that maintains cognitive rigor without the prohibitive costs of manual expert curation.

\begin{table}[h]
\centering
\caption{Cost comparison across reasoning benchmarks. ARC and GSM8K costs are estimated based on typical expert annotation rates (\$50/hour) and task design time. A$^2$RBench costs are actual API expenses.}
\label{tab:cost_comparison}
\begin{tabular}{lccc}
\toprule
\textbf{Benchmark} & \textbf{Dataset Size} & \textbf{Est. Cost per Task} & \textbf{Total Est. Cost} \\
\midrule
ARC & 1,000 & \$25--\$50 & \$25,000--\$50,000 \\
GSM8K & 8,500 & \$8--\$12 & \$68,000--\$102,000 \\
\midrule
A$^2$RBench (Ours) & 1,054 & \$0.016 & \$16.86 \\
\quad - Seed (V0) & 72 & \$0.19 & \$13.68 \\
\quad - Augmented (V1-V9) & 631 & \$0.005 & \$3.16 \\
\bottomrule
\end{tabular}
\end{table}

\section{Detailed Experimental Results and Analysis}
\label{sec:appendix_results}

This section provides a detailed repository of the quantitative results from our extensive model evaluations. To facilitate clear interpretation and direct linkage to the core findings presented in the main paper, the data is organized thematically. We begin with the global performance leaderboard, followed by granular analyses that deconstruct model performance across logical dimensions, the effects of task augmentation, and the underlying complexity of the generated rules.

\subsection{Global Performance}
\label{subsec:appendix_global_perf}

Table~\ref{tab:global_performance} presents the comprehensive performance metrics for all 14 evaluated models across the entirety of the A$^2$RBench dataset ($N=1054$). This table serves as the primary leaderboard, summarizing overall accuracy, performance on symbolic versus semantic tasks, and the generalization gap between seed (V0) and augmented (V1-V9) tasks. Crucially, it quantifies the Symbolic Dependency Gap ($\Delta_S$), which measures the performance degradation on symbol-remapped tasks and serves as a key indicator of models' over-reliance on familiar tokens.

\begin{table}[h!]
    \centering
    \caption{\textbf{Global Performance Leaderboard.} Detailed accuracy metrics across all 1054 tasks. \textbf{Total Acc}: Overall accuracy. \textbf{Sym Acc}: Accuracy on all symbolic tasks. \textbf{Sem Acc}: Accuracy on semantic tasks. \textbf{Gap ($\Delta_S$)}: The performance drop on symbolic tasks when symbols are remapped (`Original Acc - Mapped Acc'), serving as a proxy for ``Illusion of Understanding".}
    \label{tab:global_performance}
    \begin{sc}
    \resizebox{1.0\textwidth}{!}{
    \begin{tabular}{lccccccc}
        \toprule
        \textbf{Model} & \textbf{Total Acc} & \textbf{Sym Acc} & \textbf{Sem Acc} & \textbf{Seed Acc (V0)} & \textbf{Aug. Acc (V1-V9)} & \textbf{Gap ($\Delta_S$)} & \textbf{Collapse Rate} \\
        \midrule
        \textbf{Gemini3-Pro}       & \textbf{40.9\%} & \textbf{37.0}\% & 48.6\% & \textbf{39.8\%} & \textbf{41.0\%} & 4.6\%  & 39.7\% \\
        GPT-5                      & 39.0\% & 32.5\% & \textbf{52.0}\% & 38.9\% & 39.0\% & 17.7\% & 23.2\% \\
        GPT-5-Mini                 & 36.9\% & 31.5\% & 47.7\% & 34.3\% & 37.2\% & 17.4\% & 8.0\%  \\
        O4-mini                    & 33.7\% & 28.3\% & 44.3\% & 34.3\% & 33.6\% & 16.2\% & 16.0\% \\
        Claude-Sonnet-4.5          & 30.6\% & 33.3\% & 25.3\% & 23.1\% & 31.5\% & 1.7\%  & 8.2\%  \\
        GPT-5.2                    & 28.3\% & 30.5\% & 23.9\% & 21.3\% & 29.1\% & 4.0\%  & 1.1\%  \\
        Gemini-2.5-Flash           & 27.7\% & 26.4\% & 30.4\% & 25.0\% & 28.0\% & 6.0\%  & 24.3\% \\
        GLM-4.6                    & 21.0\% & 23.1\% & 16.8\% & 13.9\% & 21.8\% & 3.4\%  & 12.7\% \\
        Deepseek-V3.2              & 20.8\% & 20.8\% & 20.7\% & 17.6\% & 21.1\% & 5.1\%  & 3.0\%  \\
        Qwen3-32B                  & 19.3\% & 22.5\% & 12.8\% & 6.5\%  & 20.7\% & 1.1\%  & 9.7\%  \\
        GPT4o-Mini                 & 16.1\% & 19.5\% & 9.4\%  & 8.3\%  & 17.0\% & 8.8\%  & 10.0\% \\
        Qwen3-14B                  & 14.2\% & 17.0\% & 8.8\%  & 5.6\%  & 15.2\% & -0.3\% & 9.7\%  \\
        Gemini3-Flash              & 13.1\% & 13.0\% & 13.4\% & 11.1\% & 13.3\% & 5.4\%  & 84.7\% \\
        Qwen3-8B                   & 0.0\%  & 0.0\%  & 0.0\%  & 0.0\%  & 0.0\%  & 0.0\%  & 100.0\%\\
        \bottomrule
    \end{tabular}
    }
    \end{sc}
\end{table}

\paragraph{GPT Family Seed-Subset Analysis.}
We further evaluated GPT-family models on the 108-task seed subset. The GPT family is non-monotonic: gpt-5 achieves the highest accuracy and uses the largest average total token budget, while gpt-5.2 and gpt-5.4 use nearly the same average total tokens but differ substantially in accuracy. This suggests that the difference is better explained by reasoning budget and solution strategy than by a simple monotonic capability ordering.

\begin{table}[h!]
    \centering
    \caption{GPT-family results on the 108-task seed subset.}
    \label{tab:gpt_family_seed}
    \begin{tabular}{lcc}
        \toprule
        \textbf{Model} & \textbf{Accuracy} & \textbf{Avg. Total Tokens} \\
        \midrule
        gpt-5 & 37.96\% & 3175.1 \\
        gpt-5-mini & 33.33\% & 2132.1 \\
        gpt-5.4 & 23.15\% & 989.3 \\
        gpt-5.2 & 17.59\% & 992.9 \\
        gpt-5.4-mini & 11.11\% & 934.9 \\
        \bottomrule
    \end{tabular}
\end{table}

\subsection{Human Agreement Validation}
\label{subsec:human_agreement_validation}

We validate the Analyst LLM on stratified human-annotated subsets. For the success taxonomy, we sample 90 successful responses, with 30 examples each from True Generalization, Inferior Rule, and Surface Fitting. For the error taxonomy, we sample 90 incorrect responses, with 30 examples each from Cognitive Collapse, Abstraction Failure, and Execution Error. A computer science undergraduate independently blind-annotates these samples and we compare the labels with GPT-5-mini. These agreement rates provide encouraging support for the relevant judgment stages and suggest good practical reliability, but we do not claim that the LLM judge is noise-free or fully equivalent to human annotation.

\begin{table}[h!]
\centering
\caption{Human agreement validation for Analyst LLM taxonomies.}
\label{tab:human_agreement_validation}
\begin{tabular}{lccc}
\toprule
\textbf{Taxonomy} & \textbf{Agreement} & \textbf{Cohen's $\kappa$} & \textbf{Macro-F1} \\
\midrule
Success taxonomy & 76/90 (84.44\%) & 0.7667 & 0.8449 \\
Error taxonomy & 75/90 (83.33\%) & 0.7500 & 0.8336 \\
\bottomrule
\end{tabular}
\end{table}

\subsection{Fine-Tuning with A$^2$RBench Supervision}
\label{subsec:lora_finetuning}

We conducted a LoRA fine-tuning experiment using Qwen3-8B on A$^2$RBench supervision, where each instance contains the problem, example input-output pairs, and the answer. To avoid leakage from near-duplicate variants, we split by rule family, yielding 673 training examples and 30 validation examples. We use LoRA with $r=16$, $\alpha=32$, dropout 0.05, learning rate $2\times10^{-4}$, a cosine scheduler, and effective batch size 8; generation uses max sequence length 3072, max new tokens 512, temperature 0, and thinking mode disabled. The best checkpoint is epoch 2. The gains are not uniform across reasoning skills, but suggest improved answer convergence, output alignment, and discrete rule discrimination.

\begin{table}[h!]
\centering
\caption{Fine-tuning results with A$^2$RBench supervision.}
\label{tab:lora_finetuning}
\begin{tabular}{lc}
\toprule
\textbf{Metric} & \textbf{Result} \\
\midrule
Best checkpoint & epoch 2 \\
MMLU-Pro General & 16.35\% $\rightarrow$ 34.13\% \\
BBH Reasoning & 9.15\% $\rightarrow$ 13.24\% \\
BBH formal\_fallacies & 2.8\% $\rightarrow$ 59.2\% \\
BBH navigate & 10.8\% $\rightarrow$ 59.6\% \\
BBH web\_of\_lies & 21.2\% $\rightarrow$ 41.6\% \\
BBH boolean\_expressions & 64.0\% $\rightarrow$ 82.8\% \\
BBH JSON alignment & 29.04\% $\rightarrow$ 100.00\% \\
MMLU JSON alignment & 13.54\% $\rightarrow$ 99.95\% \\
\bottomrule
\end{tabular}
\end{table}

\subsection{Dimensionality Bottleneck}
\label{subsec:dim_analysis_ap}

Our results reveal a consistent performance dip on 2D tasks, creating a 1D $>$ 3D $>$ 2D performance hierarchy (Figure \ref{fig:dim_analysis}). This `V-shaped' trough implies that 3D tasks are inadvertently easier than 2D ones. Our analysis confirms this is not due to the intrinsic nature of the dimensions, but rather an artifact of the interaction between the author models' generative capabilities and the solvers' weaknesses.

Table \ref{tab:author_solver_perf} breaks down solver performance by the author of each task. The data indicates that the performance drop in 2D is primarily driven by tasks from a single author: O4-mini. On 2D tasks authored by the O4-mini author model, the accuracy of top models like Gemini3-Pro and GPT-5 plummets to just \textbf{22.2\%} and \textbf{23.3\%}, respectively. While other authors (e.g., Gemini-Flash) also exhibit a trend where 3D tasks are solved more accurately than 2D tasks, this gap is far less pronounced compared to the drastic collapse observed with O4-mini.

This phenomenon exposes a critical ``Generative Capability Ceiling" in current LLMs. As shown in our AST analysis (Table \ref{tab:ast_complexity}), O4-mini's 2D tasks exhibit exceptionally high logical complexity, with the highest `Nested If Depth' (2.33) and `Return Complexity' (15.67). However, when operating in 3D, these complexity metrics drop significantly.

We infer that this represents a cognitive trade-off: the author model cannot simultaneously maximize spatial complexity and logical complexity. Managing the additional spatial dimension (z-axis) in 3D consumes the model's reasoning ability, forcing it to simplify the underlying conditional logic to maintain validity. O4-mini, being a stronger model, pushes the logic complexity to the limit in 2D, making its subsequent ``simplification" in 3D—and the resulting performance gap—much more significant than weaker models like Gemini-Flash, which generate simpler code across all dimensions.
\begin{table*}[t!]
    \centering
    \caption{Solver Performance Across Dimensions (1D, 2D, 3D) for Each Author Model.}
    \label{tab:author_solver_perf}
    \begin{sc}
    \resizebox{\textwidth}{!}{%
    \begin{tabular}{l|ccc|ccc|ccc|ccc}
        \toprule
        \multirow{2}{*}{\textbf{Solver Model}} & \multicolumn{3}{c|}{\textbf{GPT-5-Mini}} & \multicolumn{3}{c|}{\textbf{Gemini2.5-Flash}} & \multicolumn{3}{c|}{\textbf{Gemini2.5-Pro}} & \multicolumn{3}{c}{\textbf{O4-mini}} \\
        \cmidrule(lr){2-4} \cmidrule(lr){5-7} \cmidrule(lr){8-10} \cmidrule(lr){11-13}
        & 1D & 2D & 3D & 1D & 2D & 3D & 1D & 2D & 3D & 1D & 2D & 3D \\
        \midrule
        Gemini3-Pro     & 63.3\% & 43.3\% & 36.0\% & 27.8\% & 42.2\% & 42.2\% & 42.9\% & 46.0\% & 28.7\% & 58.9\% & 22.2\% & 35.7\% \\
        GPT-5           & 52.2\% & 36.7\% & 44.9\% & 38.9\% & 35.6\% & 41.1\% & 36.9\% & 42.5\% & 27.5\% & 50.0\% & 23.3\% & 36.9\% \\
        GPT-5-Mini      & 47.8\% & 33.3\% & 39.3\% & 46.7\% & 27.8\% & 43.3\% & 39.3\% & 29.9\% & 27.5\% & 50.0\% & 23.3\% & 33.3\% \\
        O4-mini         & 52.2\% & 25.6\% & 28.1\% & 33.3\% & 23.3\% & 38.9\% & 41.7\% & 28.7\% & 28.7\% & 51.1\% & 20.0\% & 32.1\% \\
        Claude-Sonnet 4.5 & 51.1\% & 27.8\% & 24.7\% & 31.1\% & 24.4\% & 15.6\% & 46.4\% & 26.4\% & 18.8\% & 54.4\% & 21.1\% & 25.0\% \\
        GPT-5-2         & 40.0\% & 30.0\% & 28.1\% & 33.3\% & 18.9\% & 20.0\% & 42.9\% & 26.4\% & 17.5\% & 41.1\% & 15.6\% & 25.0\% \\
        Gemini2.5-Flash    & 53.3\% & 25.6\% & 11.2\% & 23.3\% & 22.2\% & 28.9\% & 33.3\% & 25.3\% & 26.2\% & 46.7\% & 13.3\% & 22.6\% \\
        GLM-4.6         & 26.7\% & 23.3\% & 21.3\% & 22.2\% & 16.7\% & 10.0\% & 32.1\% & 16.1\% & 18.8\% & 33.3\% & 13.3\% & 17.9\% \\
        Deepseek-V3-2   & 32.2\% & 23.3\% & 19.1\% & 18.9\% & 22.2\% & 14.4\% & 29.8\% & 27.6\% & 17.5\% & 24.4\% & 13.3\% & 6.0\% \\
        Qwen3-32B       & 26.7\% & 18.9\% & 16.9\% & 24.4\% & 12.2\% & 11.1\% & 31.0\% & 11.5\% & 16.2\% & 22.2\% & 15.6\% & 25.0\% \\
        GPT4o-Mini      & 22.2\% & 13.3\% & 19.1\% & 15.6\% & 7.8\%  & 10.0\% & 22.6\% & 12.6\% & 15.0\% & 17.8\% & 13.3\% & 25.0\% \\
        Qwen3-14B       & 17.8\% & 15.6\% & 14.6\% & 17.8\% & 15.6\% & 6.7\%  & 25.0\% & 3.4\%  & 11.2\% & 15.6\% & 10.0\% & 17.9\% \\
        Gemini3-Flash   & 7.8\%  & 16.7\% & 11.2\% & 13.3\% & 4.4\%  & 7.8\%  & 23.8\% & 4.6\%  & 10.0\% & 20.0\% & 13.3\% & 25.0\% \\
        Qwen3-8B        & 0.0\%  & 0.0\%  & 0.0\%  & 0.0\%  & 0.0\%  & 0.0\%  & 0.0\%  & 0.0\%  & 0.0\%  & 0.0\%  & 0.0\%  & 0.0\% \\
        \bottomrule
    \end{tabular}%
    }
    \end{sc}
\end{table*}

\begin{table*}[h!]
    \centering
    \caption{\textbf{AST-based Complexity Metrics of Generated Rules by Author Model.} This table details the average complexity of the Python code generated for 1D, 2D, and 3D tasks. To highlight the generative bottleneck, values in \textbf{bold} indicate the higher complexity score when comparing 2D and 3D tasks for each metric. The data reveals a nuanced picture: while some models generate 3D tasks with deeper loop structures (e.g., GPT-5-Mini), O4-mini produces 2D tasks with significantly more complex conditional logic (Nested If Depth) and return statements (Return Complexity). This explains why solver models struggle specifically with O4-mini's 2D tasks.
    \newline
    \footnotesize{\textit{Metrics}: \textbf{Max Loop Depth}: Maximum level of nested loops. \textbf{Total Ifs}: Total number of `if'/`elif' statements. \textbf{Nested If Depth}: Maximum depth of nested conditionals. \textbf{Cond. Complexity}: Max Cyclomatic complexity. \textbf{Mutability Score}: Frequency of in-place modifications. \textbf{Return Complexity}: Cyclomatic complexity of the return statement.}
    }
    \label{tab:ast_complexity}
    \scriptsize
    \begin{tabular}{llcccccc}
        \toprule
        \textbf{Author Model} & \textbf{Dim.} & \textbf{Max Loop Depth} & \textbf{Total Ifs} & \textbf{Nested If Depth} & \textbf{Cond. Complexity} & \textbf{Mutability Score} & \textbf{Return Complexity} \\
        \midrule
        \multirow{3}{*}{GPT-5-Mini} 
        & 1D & 1.00 & 3.83 & 1.17 & 11.50 & 0.50 & 11.17 \\
        & 2D & 1.00 & 3.50 & 1.00 & 9.00 & 0.00 & 2.00 \\
        & 3D & \textbf{2.50} & \textbf{8.00} & \textbf{4.25} & 9.00 & \textbf{0.75} & 2.00 \\
        \midrule
        \multirow{3}{*}{Gemini-2.5-Flash} 
        & 1D & 0.50 & 2.67 & 1.17 & 8.33 & 0.00 & 6.83 \\
        & 2D & 1.25 & 3.00 & 1.00 & 8.75 & 0.00 & 2.00 \\
        & 3D & \textbf{2.33} & \textbf{3.33} & \textbf{1.33} & \textbf{11.33} & 0.00 & 2.00 \\
        \midrule
        \multirow{3}{*}{Gemini-2.5-Pro} 
        & 1D & 0.20 & 2.40 & 1.00 & 11.80 & 0.00 & 11.80 \\
        & 2D & 0.00 & 1.50 & 1.00 & 6.00 & 0.00 & \textbf{9.75} \\
        & 3D & 0.00 & \textbf{1.60} & 1.00 & 6.00 & 0.00 & 7.80 \\
        \midrule
        \multirow{3}{*}{O4-mini} 
        & 1D & 0.67 & 1.83 & 1.17 & 8.00 & 0.00 & 7.17 \\
        & 2D & 1.67 & 3.00 & \textbf{2.33} & \textbf{7.67} & 1.33 & \textbf{15.67} \\
        & 3D & \textbf{2.80} & 3.00 & 1.40 & 7.20 & \textbf{1.40} & 2.00 \\
        \bottomrule
    \end{tabular}
\end{table*}

\subsection{Augmentation Paradox}
\label{subsec:appendix_aug_paradox}

The ``Augmentation Paradox," where certain algorithmically complex inputs paradoxically simplify the reasoning task, is quantitatively detailed in Table \ref{tab:complexity_failure_analysis}. This table correlates the information complexity of each problem variation (V0-V9), measured by compression ratio, with empirical failure metrics from model responses. The data starkly reveals that variations with the highest input complexity (e.g., V4, an edge case) simultaneously exhibit the lowest failure entropy and the highest solver accuracy, supporting our conclusion that structured, adversarial inputs can inadvertently provide strong cues that reduce rule ambiguity.

\begin{table}[h!]
    \centering
    \caption{
        \textbf{Quantitative Analysis of Problem Variation Complexity and Empirical Failure Modes.}
        This table correlates the intrinsic information complexity of each problem variation (V0-V9) with the diversity and predictability of failure modes exhibited by all evaluated models. The data reveals a significant negative correlation between information complexity (compression ratio) and failure entropy, providing quantitative evidence for the ``Augmentation Paradox" proposed in the paper. The peak of this effect is observed in variation \textbf{V4}, which has the highest information complexity and the lowest failure entropy.
    }
    \label{tab:complexity_failure_analysis}
    
    \begin{sc} 
    \scriptsize
    \begin{tabular}{lcccc}
        \toprule
        \textbf{Problem} & \textbf{Information Complexity} & \multicolumn{3}{c}{\textbf{Empirical Failure Analysis}} \\
        \cmidrule(lr){3-5}
        \textbf{Variation} & \textbf{Compression Ratio}$^1$ & \textbf{Unique Errors}$^2$ & \textbf{Failure Entropy (Bits)}$^3$ & \textbf{Avg. Edit Distance}$^4$ \\
        \midrule
        V0 (Seed)        & 1.156          & 10.264         & 2.825          & 56.221 \\
        V1 (Standard)    & 1.181          & 10.380         & 2.850          & 58.102 \\
        V2 (Standard)    & 1.179          & 10.211         & 2.811          & 57.279 \\
        V3 (Standard)    & 1.182          & 10.620         & 2.882          & 57.036 \\
        V4 (Edge Case)   & \textbf{4.286} & \textbf{4.181} & \textbf{1.532} & 68.782 \\
        V5 (Edge Case)   & 3.346          & 5.139          & 1.751          & 66.292 \\
        V6 (Edge Case)   & 2.781          & 5.746          & 1.890          & 63.807 \\
        V7 (Adversarial) & 0.769          & 11.676         & 2.935          & 54.012 \\
        V8 (Adversarial) & 0.641          & 11.118         & 2.867          & 57.867 \\
        V9 (Adversarial) & 0.565          & 10.433         & 2.688          & 60.455 \\
        \midrule
        \multicolumn{5}{p{0.95\linewidth}}{
            \footnotesize 
            \textbf{Notes:} \par
            $^1$ \textbf{Compression Ratio:} The ratio of the Gzip-compressed size to the original size of the input string. A higher ratio indicates lower pattern regularity and thus higher information complexity. \par
            $^2$ \textbf{Unique Errors:} The average count of distinct incorrect answers produced by all models for a given variation, measuring the divergence of failure modes. \par
            $^3$ \textbf{Failure Entropy:} The Shannon entropy calculated from the distribution of incorrect answers, quantifying the uncertainty of a model's failure mode. Lower entropy signifies more concentrated and predictable failures. \par
            $^4$ \textbf{Avg. Edit Distance:} The average string edit distance between a model's incorrect output and the ground-truth answer.
        } \\
        \bottomrule
    \end{tabular}
    \end{sc}
\end{table}

\subsection{Human vs. Model}
\label{subsec:appendix_human_perf}

To ground our findings in the context of human-level intelligence, we conducted a comparative evaluation on a representative subset of 108 seed tasks. Table \ref{tab:human_vs_gemini} directly compares the performance of our top-performing model, Gemini3-Pro, against human participants. The results highlight a significant performance delta across all categories—overall, by rule type, and by dimensionality—underscoring the substantial gap that currently exists between state-of-the-art LLMs and human abstract reasoning capabilities.

The human study includes three participant groups with different technical backgrounds: five CS PhD participants, five CS undergraduate participants, and five non-CS undergraduate participants. All participants completed the same 108 seed tasks under no time limit, so the comparison should be interpreted as a stratified human baseline rather than a claim about a single universal human level.

\begin{table}[h!]
\centering
\caption{Performance Comparison of Human vs. Gemini3-Pro on a representative subset of 108 tasks from the A$^2$RBench benchmark. This test set comprises all 36 semantic seed tasks, along with the 36 symbolic seed tasks presented in both their original and symbol-remapped forms. The results highlight a significant performance gap.}
\label{tab:human_vs_gemini}
\begin{sc}
\begin{tabular}{llcc}
\toprule
\multicolumn{2}{l}{\textbf{Performance Metric}} & \textbf{Gemini 3 Pro} & \textbf{Human} \\
\midrule
\multicolumn{2}{l}{Overall Performance} & 39.81\% & \textbf{68.52\%}  \\
\midrule
\multirow{2}{*}{By Rule Type} & Symbolic & 34.72\%  & \textbf{56.94\%}  \\
                              & Semantic & 50.00\%  & \textbf{91.67\%}  \\
\midrule
\multirow{3}{*}{By Dimensionality} & 1D  & 50.00\%  & \textbf{72.22\%} \\
                                   & 2D  & 33.33\%  & \textbf{63.89\%}  \\
                                   & 3D  & 36.11\%  & \textbf{69.44\%}  \\
\bottomrule
\end{tabular}
\end{sc}
\end{table}

\begin{table}[h!]
\centering
\caption{Human participant groups and within-group agreement. Each group contains five participants on the 108-task subset.}
\label{tab:human_groups}
\resizebox{\textwidth}{!}{
\begin{tabular}{lccccc}
\toprule
\textbf{Group} & \textbf{\#Tasks} & \textbf{Overall Acc.} & \textbf{1D} & \textbf{2D} & \textbf{3D} \\
\midrule
CS PhD & 540 & 370/540 (68.52\%) & 170/180 (94.44\%) & 159/180 (88.33\%) & 41/180 (22.78\%) \\
CS Undergraduate & 540 & 261/540 (48.33\%) & 160/180 (88.89\%) & 83/180 (46.11\%) & 18/180 (10.00\%) \\
Non-CS Undergraduate & 540 & 136/540 (25.19\%) & 111/180 (61.67\%) & 15/180 (8.33\%) & 10/180 (5.56\%) \\
\bottomrule
\end{tabular}
}
\end{table}

\end{document}